\documentclass[lettersize,journal]{IEEEtran}
\usepackage{amsmath,amsfonts, amsthm}
\usepackage{array}
\usepackage{textcomp}
\usepackage{stfloats}
\usepackage{url}
\usepackage{verbatim}
\usepackage{graphicx}
\usepackage{cite}

\usepackage{subfigure}
\usepackage[linesnumbered, ruled, vlined]{algorithm2e}
\usepackage{algpseudocode}
\usepackage{array}
\usepackage{caption, subcaption}
\usepackage{textcomp}
\usepackage{stfloats}
\usepackage{url}
\usepackage{verbatim}
\usepackage{graphicx}
\usepackage{cite}
\usepackage{amssymb}
\usepackage{bm}
\usepackage{color}
\usepackage{mathrsfs}
\usepackage{hyperref}
\usepackage{multirow}
\usepackage{booktabs}
\usepackage{makecell}
\usepackage{tabularx}
\usepackage{threeparttable}
\usepackage{enumerate}
\usepackage{balance}
\usepackage{enumerate}

\theoremstyle{definition}

\newtheorem{corollary}{Corollary}

\newtheorem{lemma}{Lemma}

\begin{document}

% \title{Priority-Graph Conditioned Stackelberg Decision Making \\ for Decentralized Multi-Agent Reinforcement Learning \\ in Interactive Driving}

% \title{TSC: Topological Stackelberg Coordination via Weaving Priority Graphs \\ for Multi-Agent Reinforcement Learning in Interactive Driving}

\title{TSC: Topology-Conditioned Stackelberg Coordination for Multi-Agent Reinforcement Learning in Interactive Driving}

\author{Xiaotong Zhang,
        Gang Xiong,~\IEEEmembership{Senior Member,~IEEE,}
        Yuanjing Wang,
        Siyu Teng,~\IEEEmembership{Student Member,~IEEE,}
        % Lingxi Li,~\IEEEmembership{Senior Member,~IEEE,}
        Alois Knoll,~\IEEEmembership{Fellow,~IEEE,}
        Long Chen,~\IEEEmembership{Senior Member,~IEEE}

\thanks{
This work was supported by the National Natural Science Foundation of China under Grant 62373356 and U1909204, the National Key R\&D Program of China under Grant 2022YFB4703700 and Key-Area Research and Development Program of Guangdong Province (2020B0909050001). 
\textit{(Xiaotong Zhang and Gang Xiong are co-first authors.)} \textit{(Corresponding author: Long Chen.)}}

\thanks{Xiaotong Zhang is with the State Key Laboratory of Multimodal Artificial Intelligence Systems, Institute of Automation, Chinese Academy of Sciences, Beijing 100190, China, and also with School of Artificial Intelligence, University of Chinese Academy of Sciences, Beijing 100049, China (e-mail: zhangxiaotong2023@ia.ac.cn).}

\thanks{Gang Xiong is with the State Key Laboratory of Multimodal Artificial Intelligence Systems, Institute of Automation, Chinese Academy of Sciences, Beijing 100190, China (e-mail: gang.xiong@ia.ac.cn).}

\thanks{Yuanjing Wang is with the Department of Natural Sciences, University of Durham, Durham DH1 3LE, United Kingdom (e-mail: yuanjingwang23@gmail.com).}%

\thanks{Siyu Teng is with the College of Civil and Transportation Engineering, Shenzhen University, Guangdong, 518060, China (e-mail: siyuteng@ieee.org).}

% \thanks{Lingxi Li is with Elmore Family School of Electrical and Computer Engineering, College of Engineering, Purdue University, Indianapolis, IN 46202, USA (e-mail: lingxili@purdue.edu).}

\thanks{Alois Knoll is with the Chair of Robotics, Artificial Intelligence and Realtime Systems, Technical University of Munich, 80333 Munich, Germany (e-mail: k@tum.de).}

\thanks{Long Chen is with the State Key Laboratory of Management and Control for Complex Systems, Institute of Automation, Chinese Academy of Sciences, Beijing, 100190, China, and also with Waytous Inc., Qingdao 266109, China (e-mail: long.chen@ia.ac.cn).}}

% \author{IEEE Publication Technology,~\IEEEmembership{Staff,~IEEE,}
%         % <-this % stops a space
% \thanks{This paper was produced by the IEEE Publication Technology Group. They are in Piscataway, NJ.}% <-this % stops a space
% \thanks{Manuscript received April 19, 2021; revised August 16, 2021.}}

% The paper headers
\markboth{Under Review}%
{Shell \MakeLowercase{\textit{et al.}}: A Sample Article Using IEEEtran.cls for IEEE Journals}

% \IEEEpubid{0000--0000/00\$00.00~\copyright~2021 IEEE}
% Remember, if you use this you must call \IEEEpubidadjcol in the second
% column for its text to clear the IEEEpubid mark.

\maketitle

\begin{abstract}
Safe and efficient autonomous driving in dense traffic is fundamentally a decentralized multi-agent coordination problem, where interactions at conflict points such as merging and weaving must be resolved reliably under partial observability. With only local and incomplete cues, interaction patterns can change rapidly, often causing unstable behaviors such as oscillatory yielding or unsafe commitments.
Existing multi-agent reinforcement learning (MARL) approaches either adopt synchronous decision-making, which exacerbate non-stationarity, or depend on centralized sequencing mechanisms that scale poorly as traffic density increases. To address these limitations, we propose Topology-conditioned Stackelberg Coordination (TSC), a learning framework for decentralized interactive driving under communication-free execution, which extracts a time-varying directed priority graph from braid-inspired weaving relations between trajectories, thereby defining local leader–follower dependencies without constructing a global order of play. 
Conditioned on this graph, TSC endogenously factorizes dense interactions into graph-local Stackelberg subgames and, under centralized training and decentralized execution (CTDE), learns a sequential coordination policy that anticipates leaders via action prediction and trains followers through action-conditioned value learning to approximate local best responses, improving training stability and safety in dense traffic.
Experiments across four dense traffic scenarios show that TSC achieves superior performance over representative MARL baselines across key metrics, most notably reducing collisions while maintaining competitive traffic efficiency and control smoothness. 
% Noticeably, the learned coordination induces a clear and interpretable yielding-and-passing sequence, which is further validated in real-world trials under communication-free decentralized execution.
\end{abstract}

% \textbf{Note to Practitioners}: This paper is motivated by the challenge of deploying autonomous vehicles and mobile robots in dense, unstructured environments where traffic rules are ambiguous, such as crowded intersections or busy warehouses. Current industrial solutions often rely on conservative safety rules that lead to system deadlocks or require expensive communication infrastructure to negotiate right-of-way.
% This paper proposes a decentralized coordination framework that allows agents to negotiate passage sequences solely based on onboard sensor observations, without explicit communication. By analyzing the geometric ``weaving" patterns of trajectories, the system dynamically assigns ``leader" and ``follower" roles to conflicting agents. This enables robots to predict when to yield and when to proceed, similar to how human drivers naturally negotiate a merge.
% The primary value of this approach is that it significantly improves system efficiency and resolves deadlocks without a central server. It is particularly applicable to fleet management for autonomous taxis and automated guided vehicles (AGVs) where bandwidth is limited. Future work will address robust decision-making under perception uncertainty and mixed-autonomy interactions.

\begin{IEEEkeywords}
Multi-agent systems, Reinforcement learning, Decentralized control, Stackelberg game
\end{IEEEkeywords}

\section{Introduction}

\IEEEPARstart{A}{utonomous} vehicles (AVs) must coordinate safely and efficiently in dense traffic, which is a key capability for next-generation intelligent transportation systems \cite{Wang2024Longitudinal, khalil2022connected, liu2025learning, al2025autonomous, meng2020eco, wang2024collaborative, liang2020leader}. 
In contrast to single-vehicle planning, multi-vehicle coordination must satisfy safety and kinematic constraints while forming consistent right-of-way patterns in shared spaces such as intersections, merges, and on-ramps \cite{Chen2022MARLBased, Xu2024SigmaRL, Ji2025MARL, Aksjonov2021RuleBased, Li2022GameTheoretic, Liu2023Cooperative, Shu2025Decision, sun2022interactive, bal2025cooperative}. 
Recently, MARL has emerged as a practical paradigm for modeling such interactions, enabling distributed agents to learn coordination strategies through trial and error across diverse transportation tasks \cite{Lowe2017MultiAgent, Rashid2018QMIX, Foerster2018Counterfactual, Chen2022MARLBased, Feng2025RightofWay}. 
This paradigm is particularly attractive under decentralized execution, where accurate system dynamics are difficult to obtain, communication between vehicles and infrastructure is constrained, and relying on long-horizon centralized global information is often impractical \cite{Zheng2024Safe, Ding2023Provably, Xu2024XPMARL, Liu2025Safe, Lin2025Robust}.  
Nevertheless, learning stable right-of-way patterns from local observations remains challenging in dense conflict regions, where multiple vehicles compete for the same spatiotemporal gaps and must implicitly negotiate a passage order without communication. Reciprocal reaction loops and concurrent policy updates then render the effective game highly non-stationary, often manifesting as oscillatory yielding, deadlocks, or unsafe commitments \cite{Zhang2023Inducing, Zhang2024Sequential}.

From a game-theoretic perspective, decentralized coordination can be cast as a partially observed dynamic non-cooperative game, where agents choose control actions to optimize safety–efficiency objectives under coupled collision-avoidance constraints, and stable behavior corresponds to an equilibrium that implicitly defines a consistent right-of-way ordering \cite{Li2022GameTheoretic, Shu2025Decision, Zhu2022BiHierarchical}. According to decision timing, existing MARL-based coordination methods can be broadly grouped into two classes.
The first class formulates coordination as a simultaneous-move (joint-action) Markov game, where all agents select actions synchronously at each time step \cite{Chen2022MARLBased, Xu2024SigmaRL, Ji2025MARL, Wang2024Longitudinal}. Since agents update their policies concurrently during learning, the induced environment becomes non-stationary from any individual agent's viewpoint, which can destabilize training and hinder convergence \cite{Xu2024XPMARL, Lin2025Robust, Lowe2017MultiAgent, Rashid2018QMIX, Foerster2018Counterfactual}. 
The second class introduces sequential decision structures or priority mechanisms to impose an explicit or implicit order of play, approximating Stackelberg-style leader–follower interactions in which some agents commit earlier and others respond, thereby encouraging more consistent and stable coordination patterns \cite{Hu2024Who, Zhang2023Inducing, Zhang2024Sequential, Zhang2025Coupled, Li2022GameTheoretic, Liu2023Cooperative, Zhu2022BiHierarchical}. However, many such approaches depend on constructing a global order of play or unrolling a fixed multi-agent decision sequence, causing representational and computational costs to grow quickly with the number of vehicles and limiting scalability in large, heterogeneous dense traffic settings.

\begin{figure*}[t]
    \centering
    \includegraphics[width=6.5in]{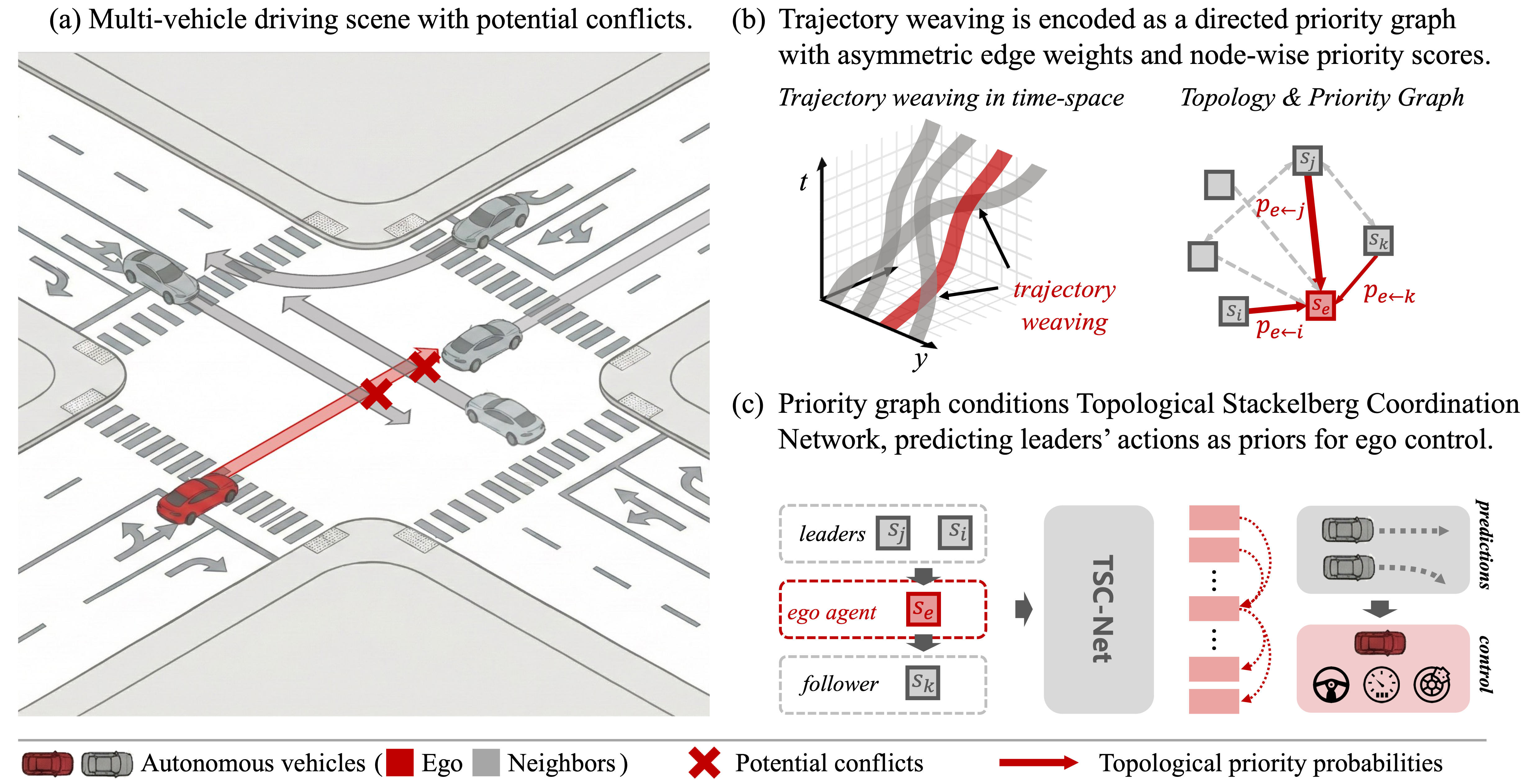}
    \setlength{\belowcaptionskip}{-0.5cm}
    \caption{Topology-conditioned Stackelberg coordination from trajectory weaving in multi-vehicle traffic.
    }
    \label{fig:intro}
\end{figure*}

Topological structures have been widely used to describe multi-agent interaction patterns that arise from trajectory weaving, merging, and overtaking \cite{Mavrogiannis2023Abstracting, Liu2024Reasoning}. 
% In particular, braid-theoretic formulations abstract crossings and precedence relations into topological braids or directed graphs, enabling compact reasoning about interaction complexity, homotopy classes, and priority/deadlock constraints in multi-robot coordination \cite{Mavrogiannis2023Abstracting}. 
In particular, braid-theoretic formulations abstract crossings and precedence relations into topological braids or directed graphs, where the crossing order of intertwined trajectories induces a directed precedence structure that supports compact reasoning about interaction complexity and priority constraints in multi-robot coordination \cite{Mavrogiannis2023Abstracting}.
% In autonomous driving, topology has been used to structure multi agent interaction prediction and planning, for example by discretizing joint futures into topological equivalence classes or by leveraging lane connectivity and homotopy cues to obtain more consistent and rule compliant behaviors \cite{Liu2024Reasoning, xiao2025srefiner, Mascetta2024RuleCompliant, Chen2025Survey}. 
In autonomous driving, related braid-inspired abstractions have been adopted to organize multi-vehicle interaction prediction and planning, for example by grouping joint futures into topological equivalence classes based on relative crossing order, or by leveraging lane connectivity and homotopy cues to obtain more consistent and rule compliant behaviors \cite{Liu2024Reasoning, xiao2025srefiner, Mascetta2024RuleCompliant, Chen2025Survey}.
Despite these advances, topology is typically introduced as an external representation \cite{Mavrogiannis2023Abstracting, Liu2024Reasoning}, rather than being embedded into the underlying multi-agent decision structure to induce feasible orders of play and directly guide policy learning under communication constrained decentralized execution \cite{Zheng2024Safe, Ding2023Provably, Xu2024XPMARL, Liu2025Safe}.

To address these limitations, we propose \underline{T}opology-conditioned \underline{S}tackelberg \underline{C}oordination (TSC), a multi-agent learning framework for safe and efficient cooperative interactive driving in dense traffic under communication-free execution. In shared conflict regions, interactions are governed by an evolving local precedence structure, whereas enforcing a global order of play is brittle and scales poorly with increasing traffic density \cite{Hu2024Who, Zhang2023Inducing, Zhang2024Sequential, Zhang2025Coupled, Li2022GameTheoretic, Shu2025Decision, Zhu2022BiHierarchical}. Instead, TSC infers a time-varying directed priority graph from braid-inspired trajectory weaving cues, which reflect prospective crossing order and the strength of interaction coupling from ego-centric observations. 
This graph provides a compact interaction scaffold. Using this graph, each agent selects a limited number of interaction-critical neighbors to form a sparse decision state, and the resulting directed relations define local leader-follower dependencies that support graph-local Stackelberg reasoning rather than a centralized decision sequence.
By coupling priority-guided sparsification with local sequential reasoning, TSC reduces training non-stationarity in dense interactions and concentrates learning on safety-critical conflicts. To implement TSC, we propose a topology-conditioned priority inference and policy learning network (TSC-Net) trained under centralized training and decentralized execution. TSC-Net jointly learns priority graph inference and topology-conditioned policy, with consistency-driven supervision that links edge-wise priority probabilities with score-induced precedence so that inferred priorities remain stable and decision-relevant. At execution time, each agent acts only from compact priority-filtered local observations, enabling communication-free yet coherent leader-follower behavior.

\noindent\textbf{Contributions.} We summarize our contributions as follows:
\begin{itemize}
    \item We propose TSC, a Topology-conditioned Stackelberg Coordination framework for dense multi-vehicle interaction under communication-free decentralized execution, which infers a time-varying directed priority graph from braid-inspired weaving cues to induce graph-local leader-follower dependencies without constructing a global order of play.
    \item We develop TSC-Net, a CTDE actor-critic instantiation of TSC that jointly learns edge-wise priority probabilities and node-wise priority scores with a consistency regularizer, and conditions value learning on predicted leader actions to enable stable local sequential reasoning without explicit communication.
    \item We evaluate TSC in four dense traffic scenarios, showing consistent collision reduction and improved interaction stability, with ablations that validate the impact of priority inference, topology-guided sparsification, and local Stackelberg reasoning.
\end{itemize}

% \vspace{-0.3cm}

% \vspace{-0.5cm}
\section{Problem Formulation}
\label{sec:problem}

\subsection{Task and Objective}
\label{subsec:problem}

We consider an interactive driving scenario with $N\ge 2$ connected and
automated vehicles evolving in discrete time
\cite{He2022IPLAN,Xu2024SigmaRL,Martinez2025AVOCADO}. Each vehicle is controlled
by an agent and we use the same index $i$ for both. Let
$\mathcal{I}=\{1,\dots,N\}$ and $\mathcal{T}=\{1,\dots,T\}$ denote the agent set
and episode time indices.

We model the problem as a partially observable stochastic game with tuple:
\begin{equation}
\label{eq:tuple}
\left\langle
N,\mathcal{S},\{\mathcal{A}_i\}_{i=1}^N,\{\mathcal{O}_i\}_{i=1}^N,\{\Omega_i\}_{i=1}^N,
\mathcal{P},\{r_i\}_{i=1}^N,T,\gamma
\right\rangle,
\end{equation}
where $\mathcal{S}\subseteq \mathbb{R}^{d_s}$ is the global state space and $\mathcal{A}_i=\mathcal{A}\subseteq \mathbb{R}^{d_a}$.
At time step $t$, the system is at a global state $s^t\in\mathcal{S}$.
Each agent $i$ chooses an action $a_i^t\in\mathcal{A}_i$ and receives a local observation
$o_i^t\in\mathcal{O}_i$ given by $o_i^t = \Omega_i(s^t)$, where $\Omega_i$ maps the global state to the observable information of agent $i$.
The stochastic transition function is denoted by $\mathcal{P}:\mathcal{S}\times \mathcal{A}_1\times\cdots\times \mathcal{A}_N\rightarrow \Delta(\mathcal{S})$,
and the next state is sampled as $s^{t+1} \sim \mathcal{P}\big(s^t, a_1^t, \dots, a_N^t\big)$.
Agent $i$ receives a scalar reward $r_i^t = r_i\big(s^t, a_1^t, \dots, a_N^t\big)$, and $\gamma\in(0,1]$ is the discount factor.

\textbf{Local observation.}
At time $t$, the set of agents observable to agent $i$ is denoted by $\mathcal{N}_i^t \subseteq \{1,\dots,N\}\setminus\{i\}$.
For implementation, $\mathcal{N}_i^t$ is represented as a fixed-size candidate set of capacity $M\in\mathbb{N}$, padding when fewer are available.
The local observation is represented as
$o_i^t = \{\tilde{o}_i^t,\{o_{i,j}^t\}_{j\in\mathcal{N}_i^t}\}$,
where $\tilde{o}_i^t$ collects ego-related features of agent $i$ and $o_{i,j}^t$ encodes features of neighbor $j$ as seen from agent $i$. This can be obtained from on-board sensing without inter-agent communication.

\textbf{Policy and objective.}
Each agent follows a decentralized policy
$\pi_\theta:o_i\rightarrow\Delta(\mathcal{A}_i)$ that maps its local observation to an action distribution. Under CTDE with parameter sharing
\cite{Lowe2017MULTI,Rashid2020MONOTONIC}, we optimize shared parameters $\theta$
to maximize the average discounted return
\begin{equation}
\label{eq:objective}
\theta^{*}
=
\arg\max_{\theta}
\ \mathbb{E}_{\pi_\theta}\!\left[
\sum_{t=1}^{T}
\gamma^{t}\, \frac{1}{N}\sum_{i=1}^{N} r_i\!\left(s^{t}, a_1^{t}, \dots, a_N^{t}\right)
\right].
\end{equation}
This setting is challenging due to non-stationarity induced by mutual reactions
and the need for consistent implicit ordering in dense conflict regions.

\subsection{Topological Interaction Graph}
\label{subsec:topo_graph}

In dense interactions, purely distance-based neighborhoods may be insufficient to capture behaviorally coupled
relations such as merging or crossing. We therefore introduce a directed
topological interaction graph that models local precedence and influence as directed dependencies among agents
and supports priority-aware coordination under decentralized execution.
At time $t$, the scene is represented by a directed graph
\begin{equation}
G_t = (V_t, E_t, W_t),
\end{equation}
where each node $v_i^t\in V_t$ corresponds to agent $i$ with node feature
$x_i^t\in\mathbb{R}^{d_x}$. For $j\neq i$, an edge $(j,i)\in E_t$ indicates that
agent $j$ can constrain the feasible maneuvers of agent $i$ and carries a
nonnegative weight $w_{i\leftarrow j}^t\in\mathbb{R}{\ge 0}$ that is not
necessarily symmetric and will be instantiated as a directed priority strength in our construction. To respect partial observability, edge features
$z{i\leftarrow j}^t\in\mathbb{R}^{d_z}$ are computed from quantities available
to agent $i$ via on-board sensing. Under decentralized execution, agent $i$
constructs an ego-centric local view $G_i^t$ induced by
${i}\cup\mathcal{N}_i^t$ and performs subsequent inference and decision making
from $G_i^t$ using only this local view without global scene access or communication \cite{Liu2024Reasoning}.

% \vspace{-0.3cm}

\section{Method}

Fig.~\ref{fig:intro} summarizes the proposed framework and its instantiation in TSC-Net. Sec.~\ref{subsec:formulation} formalizes the trajectory weaving-based priority formulation and constructs a directed priority graph with asymmetric edge weights that encodes local precedence among potentially conflicting vehicles. Sec.~\ref{subsec:TSC-Net} then presents TSC-Net, which infers an ego-centric local priority graph and performs topology-conditioned Stackelberg coordination using predicted leader actions to support communication-free execution. Sec.~\ref{subsec:training} describes the learning objective and the CTDE training procedure.

\vspace{-0.5cm}

\subsection{Topological Priority Formulation}
\label{subsec:formulation}

\subsubsection{Agent-centric future-trajectory representation}
\label{subsubsec:trajectory-representation}

Let $\mathcal{Y}^t=\{\mathbf{Y}_i^t\}_{i=1}^{N}$ denote the multi-agent future trajectories over a finite horizon $H$, where
$\mathbf{Y}_i^t=\{\mathbf{X}_i^{t+h}\}_{h=0}^{H}\in\mathbb{R}^{(H+1)\times 2}$
and $\mathbf{X}_i^{t+h}=\big[p_{x,i}^{t+h},\,p_{y,i}^{t+h},]^\top \in\mathbb{R}^{2}$
collects 2D position.
These future rollouts are used only for training supervision and are unavailable at decentralized execution.
We are interested in how these trajectories intertwine in the neighborhood of each agent and in representing such interactions in a structured form that can be exploited by a graph-based decision model.

For each ego agent $i$, we introduce an agent-centric lateral-longitudinal coordinate frame.
Let $\theta_i^t\in\mathbb{R}$ denote its heading over the horizon and
$R_i^t\in\mathbb{R}^{2\times 2}$ the rotation matrix that aligns the $x$-axis of this frame with $\theta_i^t$.
The future position of any agent $j$ is mapped into this local frame over
a horizon $H$ as:
\begin{equation}
    \tilde{\mathbf{X}}_j^{(i)}(t+h)=R_i\bigl(\mathbf{X}_j^{t+h}-\mathbf{X}_i^{t}\bigr)\in\mathbb{R}^{2},
    \quad h = 0,\dots,H
\end{equation}

We denote its lateral coordinate by $y_k^{(n)}(t+h)$, and $\{y_j^{(i)}(t+h)\}_{h=0}^{H}\in\mathbb{R}^{H+1}$ describes, from the viewpoint of agent $i$, the sideways motion of $j$ relative to $i$.

\subsubsection{Weaving distance and soft pairwise topological priority}
\label{subsubsec:topological-priority}

Consider two distinct agents $(i,j)$ at time $t$. In the ego-centric
lateral-longitudinal frame of agent $i$, define the lateral gap as:
\begin{equation}
    \Delta^{(i)}_{i\leftarrow j}(h)
    = y_i^{(i)}(t+h) - y_j^{(i)}(t+h),
    \quad h = 0,\dots,H.
\end{equation}

Intuitively, strong weaving occurs when the two future lateral profiles
approach each other and tend toward a braid crossing, i.e., $\Delta^{(i)}_{i\leftarrow j}(h)$
becomes small in magnitude and exhibits a sign change over
consecutive steps.
This crossing event induces a local crossing-order cue, which we use to define directed precedence between the pair and to supervise the corresponding edge in the priority graph.

To quantify this effect, for $h=0,\dots,H-1$ we define a per-step near-crossing
score:
\begin{equation}
    \delta^{(i)}_{i\leftarrow j}(h)
    =
    \frac{
        \min\bigl(|\Delta^{(i)}_{i\leftarrow j}(h)|,\ |\Delta^{(i)}_{i\leftarrow j}(h+1)|\bigr)
    }{
        \varepsilon + \max\bigl(0,\,-\Delta^{(i)}_{i\leftarrow j}(h)\,\Delta^{(i)}_{i\leftarrow j}(h+1)\bigr)
    }
    \in\mathbb{R}_{+},
\end{equation}
where $\varepsilon>0$ is a stabilizing constant. The numerator measures the
closest lateral separation over two consecutive steps, while the denominator
downweights non-interleaving cases and yields smaller values when a sign change
is likely (since $-\Delta(h)\Delta(h+1)>0$).
We then aggregate over the horizon by taking the most critical (closest)
interleaving event, defining the directed weaving distance:
\begin{equation}
    d^t_{i\leftarrow j}
    =
    \min_{h=0,\dots,H-1}\delta^{(i)}_{i\leftarrow j}(h).
\end{equation}

Smaller $d^t_{i\leftarrow j}$ indicates stronger topological coupling between
the future motions of $i$ and $j$ as viewed from agent $i$'s frame, whereas larger
values indicate weak or negligible weaving. Repeating the same construction
with roles swapped (i.e., in the local frame of agent $j$) yields $d^t_{j\leftarrow i}$.

Given the two directed distances, we assign asymmetric pairwise priority
probabilities via a two-class softmax:
\begin{equation}
    p^t_{i\leftarrow j}
    =
    \frac{\exp(-d^t_{i\leftarrow j}/\tau)}
         {\exp(-d^t_{i\leftarrow j}/\tau)+\exp(-d^t_{j\leftarrow i}/\tau)},
    \quad
    p^t_{j\leftarrow i} = 1 - p^t_{i\leftarrow j},
\end{equation}
where $\tau>0$ controls the softness. With the arrow convention $i\leftarrow j$
(meaning influence from $j$ to $i$), a larger $p^t_{i\leftarrow j}$ indicates
that $j$ is more dominant relative to $i$ within their intertwined region.
Collecting agents as vertices $V_t$, the pairwise probabilities
$\{p^t_{i\leftarrow j}\}$ define the edge weights of the topological interaction
graph $G_t=(V_t,E_t,W_t)$ introduced in Sec.~\ref{subsec:topo_graph}, by setting
$w^t_{i\leftarrow j}=p^t_{i\leftarrow j}$ for each interaction relevant ordered
pair $(j,i)\in E_t$. The general asymmetry $p^t_{i\leftarrow j}\neq p^t_{j\leftarrow i}$
thus encodes directional influence on $G_t$.

\subsubsection{Scalar priority field from pairwise probabilities}
\label{subsubsec:scalar-priority}

The pairwise probabilities $\{p^t_{i\leftarrow j}\}$ specify edge-level
preferences between interacting agents, but do not directly yield a compact
ordering over the whole interaction graph.
We therefore assign each agent $i$ at time $t$ a scalar topological priority
score $s_i^t\in\mathbb{R}$, where a larger value indicates that $i$ tends to
dominate (i.e., be yielded to by) its neighbors.
The collection $\{s_i^t\}$ can be viewed as a scalar potential on the nodes of
$G_t$ whose pairwise differences summarize directed priority relations.

To derive supervision from these probabilities, we define an
antisymmetric preference signal for each ordered pair:
\begin{equation}
A^t_{i\leftarrow j}
    =
    p^t_{j\leftarrow i} - p^t_{i\leftarrow j},
    \qquad
    A^t_{j\leftarrow i} = -A^t_{i\leftarrow j}.
\end{equation}

With this convention, $A^t_{i\leftarrow j}>0$ indicates that $i$ should be
assigned higher priority than $j$.
We then recover node scores by fitting score differences to these signed
preferences via a weighted least-squares objective:
\begin{equation}
    \{s_i^{t,\star}\}
    =
    \arg\min_{\{s_i\}}
    \frac{1}{2}\sum_{(j,i)\in E_t} c^t_{i\leftarrow j}
    \bigl( (s_i - s_j) - A^t_{i\leftarrow j} \bigr)^2,
\end{equation}
subject to the gauge constraint $\sum_{i=1}^{N} s_i = 0$, which removes the
global shift ambiguity of $\{s_i\}$.
Here the summation is taken over interaction-relevant directed edges
$(j,i)\in E_t$, and $c^t_{i\leftarrow j}$ weights the confidence of the
corresponding preference, downweighting uncertain pairs (e.g., when
$p^t_{i\leftarrow j}\approx\tfrac{1}{2}$), and we set $c^t_{i\leftarrow j}=\bigl|p^t_{i\leftarrow j}-\tfrac{1}{2}\bigr|^\alpha$.
Solving this problem yields, for each time $t$, a scalar priority field
$\{s_i^{t,\star}\}$ that provides a graph-level ordering consistent (in the
least-squares sense) with the edge-wise preferences.

\begin{figure*}[t]
    \centering
    \setlength{\belowcaptionskip}{-0.5cm}
    \includegraphics[width=6.5in]{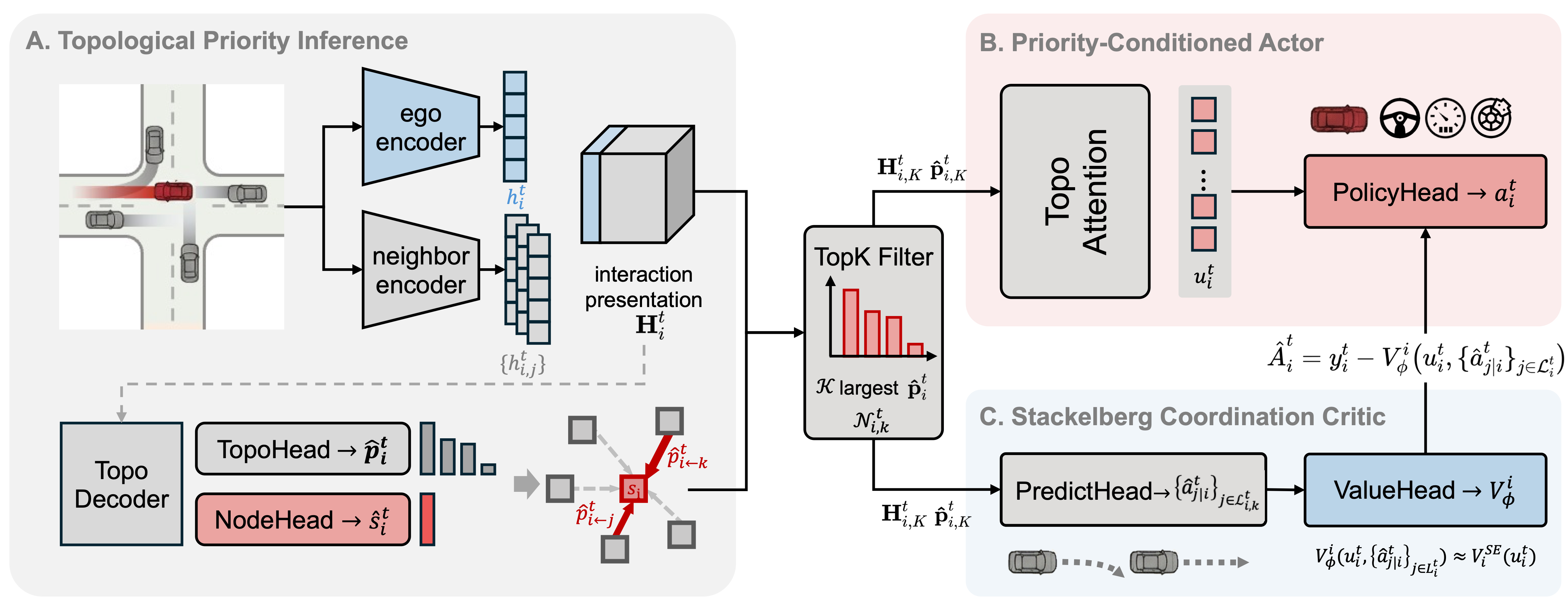}
    \caption{\textbf{The TSC-Net architecture.}
    Ego and neighbor encoders construct an interaction representation $\mathbf{H}_i^t$, from which the TopoDecoder predicts pairwise priorities $\mathbf{\hat{p}}_{i\leftarrow j}^t$ and node scores $\mathbf{\hat{s}}_i^t$ to form a local priority graph. Guided by this graph, a TopK filter and TopoAttention aggregate the most influential neighbors into a compact decision state $u_i^t$, which the PolicyHead maps to the continuous action $a_i^t$. The same priority structure defines the local leader set $\mathcal{L}_i^t$. The PredictHead estimates leader actions $\{\hat{a}_{j\mid i}^t\}_{j\in\mathcal{L}_i^t}$, and the ValueHead conditions on these predictions to approximate the local Stackelberg value, yielding TD advantages $\hat{A}_i^t$ for actor optimization under CTDE.
    }
  \label{fig:tscnet}
\end{figure*}

\subsubsection{Joint prediction and supervision of pairwise and scalar priorities}
\label{subsubsec:prediction-supervision-priorities}

At each time step $t$, agent $i$ predicts edge-wise priority probabilities $\hat{p}^t_{i\leftarrow j}\in[0,1]$ for each neighbour $j\in\mathcal{N}_i^t$ and a node-wise scalar priority score $\hat{s}_i^t\in\mathbb{R}$ for itself from its ego-centric observation $o_i^t$.
The training objective combines three terms. An edge-level loss fits the  priorities,
\begin{equation}
    \mathcal{L}_{\mathrm{edge}}
    =
    \sum_t \sum_{(j,i)\in E_t}
    \mathrm{BCE}\bigl(\hat{p}^t_{i\leftarrow j},\, p^t_{i\leftarrow j}\bigr),
\end{equation}
a node-level loss aligns the predicted scalar field with the least-squares solution,
\begin{equation}
    \mathcal{L}_{\mathrm{node}}
    =
    \sum_t \sum_i
    \bigl\|\hat{s}_i^t - s_i^{t,\star}\bigr\|_2^2,
\end{equation}
and a consistency term encourages the edge probabilities to agree with the ordering induced by the node scores.
Specifically, we construct
\begin{equation}
    \tilde{p}^t_{i\leftarrow j}
    =
    \sigma\!\left(\frac{\hat{s}_j^t - \hat{s}_i^t}{\tau_s}\right),
\end{equation}
with temperature $\tau_s>0$ and logistic sigmoid $\sigma(\cdot)$, and penalise their discrepancy via
\begin{equation}
    \mathcal{L}_{\mathrm{cons}}
    =
    \sum_t \sum_{(j,i)\in E_t}
    \bigl(\hat{p}^t_{i\leftarrow j} - \tilde{p}^t_{i\leftarrow j}\bigr)^2.
\end{equation}

The overall topological loss is:
\begin{equation}
    \mathcal{L}_{\mathrm{topo}}
    =
    \mathcal{L}_{\mathrm{edge}}
    +
    \lambda_{\mathrm{node}}\,\mathcal{L}_{\mathrm{node}}
    +
    \lambda_{\mathrm{cons}}\,\mathcal{L}_{\mathrm{cons}},
\end{equation}
with trade-off weights $\lambda_{\mathrm{node}},\lambda_{\mathrm{cons}}>0$.
At test time, each agent only requires its own observation $o_i^t$ to compute $\hat{s}_i^t$ and $\hat{p}^t_{i\leftarrow j}$, enabling decentralized estimation of both pairwise and scalar priorities without any central coordinator or inter-vehicle communication.

\subsection{Topology-conditioned Stackelberg Coordination Network (TSC-Net)}
\label{subsec:TSC-Net}

As shown in Fig.~\ref{fig:tscnet}, we present the Topology-conditioned Stackelberg Coordination Network (TSC-Net), which instantiates TSC under centralized training and decentralized execution. Guided by the topological priority formulation in Sec.~\ref{subsec:formulation}, TSC-Net infers an ego-centric local priority graph from observations and uses it to enable topology-conditioned leader–follower coordination under decentralized, communication-free execution.

\subsubsection{Trajectory weaving for local priority graph inference}
\label{subsubsec:topological-behavioral}

Given the ego-centric observation $o_i^t$ defined in Sec.~\ref{subsec:problem}, agent $i$ constructs an agent-centric representation for local priority graph inference.
We encode the ego part and each neighbor part separately to obtain latent features:
\begin{equation}
    h_i^t = \mathrm{Encoder}_{\mathrm{ego}}(\tilde{o}_i^t),\quad
    h_{i,j}^t = \mathrm{Encoder}_{\mathrm{nbr}}(o_{i,j}^t),
    \ j\in\mathcal{N}_i^t,
\end{equation}
where $h_i^t\in\mathbb{R}^{d_e}$ is the ego embedding, $h_{i,j}^t\in\mathbb{R}^{d_n}$ encodes the interaction from neighbor $j$ as observed by agent $i$.
Stacking the ego and neighbor embeddings yields an ego-centric interaction representation:
\begin{equation}
    \mathbf{H}_i^t
    =
    \bigl[h_i^t;\ \{h_{i,j}^t\}_{j\in\mathcal{N}_i^t}\bigr]
    \in \mathbb{R}^{d_e + M d_n},
\end{equation}
where $\{h_{i,j}^t\}_{j\in\mathcal{N}_i^t}$ denotes the collection of neighbour embeddings defined over $\mathcal{N}_i^t$ with padding when fewer neighbors are available.

On top of $\mathbf{H}_i^t$ we introduce a topological priority decoding stream that recovers edge-wise and node-wise priorities induced by trajectory weaving:
\begin{align}
    \mathbf{Q}_i^{\mathrm{topo}}
        &= \mathrm{TopoDecoder}(\mathbf{H}_i^t),
        \label{eq:topodecoder}\\
    \hat{\mathbf{p}}_i^t
        &= \mathrm{TopoHead}(\mathbf{Q}_i^{\mathrm{topo}}),
        \label{eq:topohead}\\
    \hat{s}_i^t
        &= \mathrm{NodeHead}(\mathbf{Q}_i^{\mathrm{topo}}).
        \label{eq:nodehead}
\end{align}

Specifically, the agent-centric representation $\mathbf{H}_i^t$ is first mapped to per-neighbor topology features $\mathbf{Q}_i^{\mathrm{topo}}\in\mathbb{R}^{M\times d_t}$.
These features are then fed into two lightweight heads to produce edge-wise priority probabilities $\hat{\mathbf{p}}_i^t\in[0,1]^M$ and a node-wise scalar priority score $\hat{s}_i^t\in\mathbb{R}$.

During training, the priority inference branch is supervised by the
 topological priority labels $p_{i\leftarrow j}^t$ and $s_i^{t,\star}$.
The edge labels satisfy the reciprocity constraint
\begin{equation}
    p_{i\leftarrow j}^t + p_{j\leftarrow i}^t = 1.
\end{equation}

The edge labels are de-cycled offline to suppress short directed loops in the induced priority graph.
Consequently, each agent can infer a consistent local ordering signal from $\mathbf{H}_i^t$ under decentralized execution.

\subsubsection{Priority-Conditioned Local Decision Interface}
\label{subsubsec:decision-interface}

Before performing Stackelberg coordination, each agent compresses its raw neighborhood into a compact, priority filtered subset. The pairwise priorities $\hat{p}_{i\leftarrow j}^t$ provide a directional yielding cue under our arrow convention $i\leftarrow j$. Larger values indicate that neighbor $j$ is more dominant in the inferred weaving relation and should therefore receive greater attention when agent $i$ plans.

To keep the per-agent control input bounded while retaining the most influential
interactions, agent $i$ selects the $K$ neighbors with the largest priorities
and aggregates their latent features:
\begin{align}
    \mathcal{N}_{i,K}^t
        &=
        \mathrm{TopK}_K\bigl(\{\hat{p}_{i\leftarrow j}^t\}_{j\in\mathcal{N}_i^t}\bigr),
        \label{eq:topk}\\
    C_i^t
        &=
        \mathrm{TopoAttn}\Bigl([h_i^t;\ \{h_{i,j}^t\}_{j\in\mathcal{N}_{i,K}^t}]\Bigr)
        \in \mathbb{R}^{d_c},
        \label{eq:topoattn}
\end{align}
where $\mathcal{N}_{i,K}^t\subseteq \mathcal{N}_i^t$ denotes the selected
neighbor set. This priority-guided compression focuses the policy on strongly
coupled interactions and supports scalability as traffic density increases.
On top of the aggregated context, we define the actor branch of TSC-Net. The
context $C_i^t$ and scalar priority $\hat{s}_i^t$ are mapped to a local decision
state $u_i^t$, which parametrizes a stochastic policy:
\begin{align}
    u_i^t
        &= \mathrm{EgoDecoder}\bigl(C_i^t,\ \hat{s}_i^t\bigr),\\
    a_i^t
        &\sim \pi_\theta(\cdot \mid u_i^t)
        = \mathrm{PolicyHead}_\theta(u_i^t).
\end{align}

The $\mathrm{PolicyHead}$ outputs mean and diagonal standard deviation of a Gaussian distribution. The sampled action is then squashed by $\tanh$ and rescaled to continuous control space.

This subsection specifies the actor interface only. The critic is introduced in
Sec.~\ref{subsubsec:stackelberg-coordination}, where it conditions on $u_i^t$
together with predicted actions of priority-selected neighbors to evaluate the
priority-guided leader-follower structure, while keeping such predictions out
of the policy input for training stability.

\subsubsection{Stackelberg Coordination on the Topological Priority Graph}
\label{subsubsec:stackelberg-coordination}

We define a Stackelberg-style local decision rule on the inferred priority
graph by identifying, for each agent $i$, a subset of neighbors that are treated
as \emph{local leaders} at time $t$. Among the Top-$K$ neighbors
$\mathcal{N}_{i,K}^t$ selected in \eqref{eq:topk}, we define:
\begin{equation}
    \mathcal{L}_i^t=
    \Bigl\{
        j\in\mathcal{N}_{i,K}^t \,\Big|\,
        \hat{p}_{i\leftarrow j}^t > \tfrac{1}{2} + \Delta_p
    \Bigr\},
\end{equation}
where $\Delta_p\ge 0$ is a small margin to avoid unstable ties around $0.5$.
Under the arrow convention $i\leftarrow j$, $\mathcal{L}_i^t$ collects those
neighbors whose inferred weaving relation suggests that $i$ should yield to $j$.

Under parameter sharing, agents execute different instances of the same policy
$\pi_\theta$. However, the realized leader actions
$a_j^t\sim\pi_\theta(\cdot\mid u_j^t)$ are not observable to agent $i$ at
decision time. We therefore predict one-step neighbor actions using the
priority-filtered embeddings
$\mathbf{H}_{i,K}^t=\{h_{i,j}^t\}_{j\in\mathcal{N}_{i,K}^t}$ and priorities
$\hat{\mathbf{p}}_{i,K}^t=\{\hat{p}_{i\leftarrow j}^t\}_{j\in\mathcal{N}_{i,K}^t}$
\begin{equation}
    \hat{\mathbf{a}}_i^t
    =
    \mathrm{PredictHead}\bigl(\mathbf{H}_{i,K}^t,\ \hat{\mathbf{p}}_{i,K}^t\bigr),
\end{equation}
where $\hat{\mathbf{a}}_i^t=\{\hat{a}_{j\mid i}^t\}_{j\in\mathcal{N}_{i,K}^t}$
are the predicted neighbor actions as perceived by agent $i$. We use
$\{\hat{a}_{j\mid i}^t\}_{j\in\mathcal{L}_i^t}$ as topology-guided
approximations of the (unobserved) leader actions in the local coordination
game.

Given the ego decision state $u_i^t$ and predicted leader actions, we implement a
Stackelberg-conditioned critic via
\begin{equation}
    V_\phi^i\bigl(
        u_i^t,\ \{\hat{a}_{j\mid i}^t\}_{j\in\mathcal{L}_i^t}
    \bigr)
    =
    \mathrm{ValueHead}\Bigl(
        u_i^t,\ \{\hat{a}_{j\mid i}^t\}_{j\in\mathcal{L}_i^t}
    \Bigr),
    \label{eq:valhead}
\end{equation}
trained by temporal-difference learning to estimate the expected discounted
return of agent $i$ under the induced local hierarchy. We formalize this target
through the local Stackelberg state-value $V_i^{\mathrm{SE}}(u_i^t)$ and enforce
\begin{equation}
    V_\phi^i\bigl(u_i^t,\{\hat{a}_{j\mid i}^t\}_{j\in\mathcal{L}_i^t}\bigr)
    \approx
    V_i^{\mathrm{SE}}(u_i^t).
    \label{eq:se-approx}
\end{equation}

This construction can be interpreted through classical Stackelberg games, where
a leader commits to a policy and a follower responds optimally. While the
standard equilibrium is defined by the bilevel program:
\begin{equation}
    \pi_L^\star
    =
    \arg\max_{\pi_L}
    J_L\bigl(\pi_L,\ \mathrm{BR}(\pi_L)\bigr),
    \qquad
    \pi_F^\star = \mathrm{BR}(\pi_L^\star),
    \label{eq:classical-stackelberg}
\end{equation}

Our objective is not to solve \eqref{eq:classical-stackelberg} globally, but to
realize follower-side evaluation locally on the priority graph. For
agent $i$, treating $\mathcal{L}_i^t$ as leaders yields the Bellman recursion:
\begin{equation}
\begin{aligned}
V_i^{\mathrm{SE}}(u_i^t)
&=
\mathbb{E}\Bigl[
    r_i^t + \gamma V_i^{\mathrm{SE}}(u_i^{t+1})
    \,\Bigm|\,
    u_i^t,\ a_i^t\sim\pi_\theta(\cdot\mid u_i^t),\\
&\hspace{3.6em}
    a_j^t\sim\pi_\theta(\cdot\mid u_j^t),\ \forall j\in\mathcal{L}_i^t
\Bigr],
\end{aligned}
\label{eq:se-bellman}
\end{equation}
where neighbors in $\mathcal{L}_i^t$ act as local leaders for $i$ under parameter
sharing. Maximizing
$J_i^{\mathrm{SE}}(\pi_\theta)=\mathbb{E}[V_i^{\mathrm{SE}}(u_i^0)]$ thus
corresponds to seeking a local best response to the leader set:
\begin{equation}
    \mathrm{BR}_i(\{\pi_\theta\}_{j\in\mathcal{L}_i^t})
    \in
    \arg\max_{\pi_\theta}\,
    J_i^{\mathrm{SE}}(\pi_\theta).
    \label{eq:local-br}
\end{equation}

In decentralized execution, the leader actions in \eqref{eq:se-bellman} are
unavailable and are replaced by predictions $\hat{a}_{j\mid i}^t$. Conditioning
$V_\phi^i$ on $\{\hat{a}_{j\mid i}^t\}_{j\in\mathcal{L}_i^t}$ therefore yields a
biased but tractable approximation to the local Stackelberg backup. Under mild
assumptions on the priority graph and bounded prediction error, the induced
value bias remains controlled and the resulting advantage estimates support
stable policy-gradient updates. Lemma~\ref{lem:se-bellman-bound} and Lemma~\ref{lem:se-pdl} in Appendix provide a formal statement and
connect the induced update to the underlying local bilevel structure.

Crucially, the actor depends only on $u_i^t$ and does not take
$\{\hat{a}_{j\mid i}^t\}$ as direct inputs. Predicted leader actions shape the
value landscape through the critic, rather than being hard-coded into the
policy parameterization, which empirically improves training stability while
preserving topology-conditioned coordination.

\subsection{Learning Objective and Training Procedure}
\label{subsec:training}

We train TSC-Net under CTDE paradigm. The overall objective consists of (i) supervised learning of the
topological priority graph, (ii) neighbor-action prediction loss,
and (iii) actor-critic objective for topology-conditioned control.

The priority inference branch (Sec.~\ref{subsubsec:topological-behavioral}) predicts edge probabilities $\hat{p}{i\leftarrow j}^t$ and node scores $\hat{s}i^t$ from $o_i^t$ and is supervised by the trajectory-derived labels $p{i\leftarrow j}^t$ and $s_i^{t,\star}$. We adopt the priority inference loss $\mathcal{L}{\mathrm{topo}}$ defined in Sec.~\ref{subsubsec:topological-behavioral} to train this branch.

The prediction head outputs $\hat{a}_{j\mid i}^t$ and is trained on leaders
$j\in\mathcal{L}_i^t$ with
\begin{equation}
    \mathcal{L}_{\mathrm{lead}}
    =
    \sum_{t,i}\sum_{j\in\mathcal{L}_i^t}
    \ell_{\mathrm{act}}\bigl(\hat{a}_{j\mid i}^t,\ a_j^t\bigr),
\end{equation}
where $\ell_{\mathrm{act}}$ is a regression loss.

Given rewards $r_i^t$ and discount $\gamma$, we train the Stackelberg-conditioned
critic with a squared temporal-difference loss:
\begin{equation}
    \mathcal{L}_{\mathrm{value}}
    =
    \sum_{t,i}
    \Bigl(
        V_\phi^i\bigl(u_i^t,\{\hat{a}_{j\mid i}^t\}_{j\in\mathcal{L}_i^t}\bigr)
        - y_i^t
    \Bigr)^2,
\end{equation}
where $y_i^t$ is a one-step bootstrapped target computed using a target critic.
The actor is updated with an advantage-weighted log-likelihood objective:
\begin{equation}
    \mathcal{L}_{\mathrm{policy}}
    =
    - \sum_{t,i}
    \log \pi_\theta(a_i^t \mid u_i^t)\,\hat{A}_i^t,
\end{equation}
with detached advantages:
\begin{equation}
    \hat{A}_i^t
    =
    y_i^t
    -
    V_\phi^i\bigl(
        u_i^t,\{\hat{a}_{j\mid i}^t\}_{j\in\mathcal{L}_i^t}
    \bigr).
\end{equation}

We minimize the aggregate objective:
\begin{equation}
    \mathcal{L}
    =
    \mathcal{L}_{\mathrm{policy}}
    +
    \lambda_V\,\mathcal{L}_{\mathrm{value}}
    +
    \lambda_{\mathrm{topo}}\,\mathcal{L}_{\mathrm{topo}}
    +
    \lambda_{\mathrm{lead}}\,\mathcal{L}_{\mathrm{lead}},
\end{equation}
with non-negative weights
$\lambda_V,\lambda_{\mathrm{topo}},\lambda_{\mathrm{lead}}$.
Training proceeds by collecting joint trajectories with the current
decentralized policy and storing them in a replay buffer.
% the corresponding
% topological labels are constructed as in Sec.~\ref{subsec:formulation}. 
Mini-batches
sampled from the buffer are used to minimize $\mathcal{L}$, jointly updating the
priority inference modules and the topology-conditioned actor-critic.

\section{Experiments}

\subsection{Evaluation Platform}
\label{subsec:exp_platform}

\begin{figure*}[t]
    \centering
    \setlength{\belowcaptionskip}{-0.5cm}
    \includegraphics[width=6.5in]{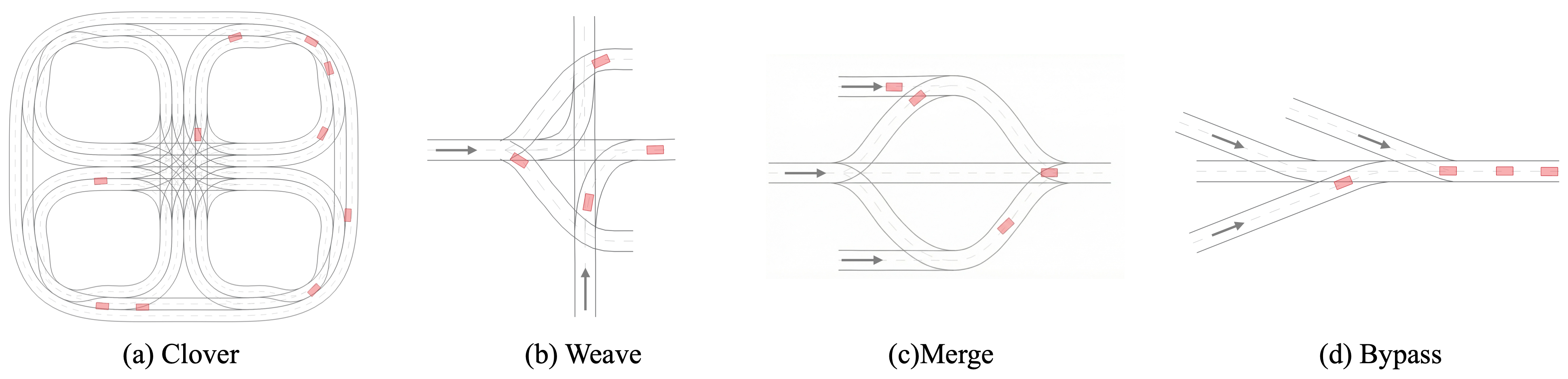}
    \caption{Evaluation scenarios for dense multi-vehicle interaction.}
  \label{fig:scenes}
\end{figure*}

We build our multi-agent traffic environment on \emph{VMAS}~\cite{bettini2022vmas}
and follow the environment design and evaluation protocol of
SigmaRL~\cite{Xu2024SigmaRL}. The simulator enables efficient parallel rollouts
with low-level continuous control for all vehicles. Fig.~\ref{fig:scenes} shows
representative maps. We evaluate on four traffic scenarios:

\textbf{Clover:} A cloverleaf interchange with repeated merge-diverge segments
and frequent short-range conflicts (initialize 20 vehicles).

\textbf{Weave:} A compact skewed intersection with tightly coupled turning and
crossing flows (initialize 8 vehicles).

\textbf{Merge:} An on-ramp merging task requiring gap acceptance and cooperative
speed adjustment (initialize 8 vehicles).

\textbf{Bypass:} A looped bypass connecting parallel road segments via an
alternative curved path (initialize 8 vehicles).

We report three evaluation metrics: collision rate (CR), average
speed (AS), and smoothness (SM). CR is decomposed into agent-agent and
agent-map events. Let $T$ be the episode length and $\mathbb{I}[\cdot]$ the
indicator:
\begin{equation}
\begin{aligned}
& \mathrm{CR}_{\mathrm{AA}} \triangleq \frac{1}{T}\sum_{t=1}^{T}\mathbb{I}\Big[\exists i:\ \text{collide}^{\,t}_{i,\mathrm{agent}}\Big]\times 100,\quad \\
& \mathrm{CR}_{\mathrm{AM}} \triangleq \frac{1}{T}\sum_{t=1}^{T}\mathbb{I}\Big[\exists i:\ \text{collide}^{\,t}_{i,\mathrm{map}}\Big]\times 100,
\end{aligned}
\end{equation}
and $\mathrm{CR}=\mathrm{CR}_{\mathrm{AA}}+\mathrm{CR}_{\mathrm{AM}}$.

AS measures traffic efficiency as the mean speed magnitude over agents and time steps,
normalized by $v_{\max}$:
\begin{equation}
\mathrm{AS} \triangleq \frac{1}{NT}\sum_{i=1}^{N}\sum_{t=1}^{T}\frac{\lVert v_i^t\rVert}{v_{\max}}\times 100.
\end{equation}

SM measures control variability via step-to-step changes in longitudinal and
steering commands:
\begin{equation}
\begin{aligned}
\mathrm{SM} \triangleq \frac{1}{N(T-1)} &\sum_{i=1}^{N}\sum_{t=2}^{T}
\beta\,\frac{\left|c^{t}_{i,\mathrm{lon}}-c^{t-1}_{i,\mathrm{lon}}\right|}{c_{\mathrm{lon,max}}}
+ \\
& (1-\beta)\,\frac{\left|c^{t}_{i,\mathrm{steer}}-c^{t-1}_{i,\mathrm{steer}}\right|}{c_{\mathrm{steer,max}}}
\times 100,
\end{aligned}
\end{equation}
where the first and second terms define the longitudinal and lateral components
$\mathrm{SM}_{\mathrm{LO}}$ and $\mathrm{SM}_{\mathrm{LA}}$, respectively.
Here $c_{\mathrm{lon,max}}$ and $c_{\mathrm{steer,max}}$ normalize the command
ranges and we set $\beta=0.5$.

\begin{table}[t]
\centering
\caption{Testing results on four scenarios from different MARL methods. 
Collision rate (CR$[\%]$) measures safety and is decomposed into agent-agent collisions $\mathrm{CR}_{\mathrm{AA}}$ and agent-map collisions $\mathrm{CR}_{\mathrm{AM}}$. Average speed (AS$[\%]$) measures traffic efficiency. Smoothness (SM$[\%]$) measures control stability.
Best is bold and second best is underlined within each scenario.
}
\label{tab:detailed_results}
\setlength{\tabcolsep}{6pt}
\renewcommand{\arraystretch}{1.0}
\small
% \begin{tabular}{c l r r r r}
\begin{tabular}{c l *{4}{>{\centering\arraybackslash}p{1.15cm}}}
\toprule
& & \multicolumn{1}{c}{MFPO} & \multicolumn{1}{c}{SigmaRL} & \multicolumn{1}{c}{XP-MARL} & \multicolumn{1}{c}{TSC} \\
\midrule

\multirow{9}{*}{\rotatebox[origin=c]{90}{Clover}}
& $\mathrm{CR}_{\mathrm{AA}}\downarrow$ & 1.71 & 1.67 & \underline{0.19} & \textbf{0.04} \\
& $\mathrm{CR}_{\mathrm{AM}}\downarrow$ & \textbf{0.00} & \textbf{0.00} & \underline{0.03} & \underline{0.03} \\
& $\mathrm{CR}\downarrow$              & 1.71 & 1.67 & \underline{0.22} & \textbf{0.07} \\
\cmidrule(lr){2-6}
& $\mathrm{AS}\uparrow$                & 86.6 & \underline{87.1} & 84.8 & \textbf{88.2} \\
\cmidrule(lr){2-6}
& $\mathrm{SM}_{\mathrm{LO}}\downarrow$& 2.63 & 2.58 & \underline{2.27} & \textbf{1.95} \\
& $\mathrm{SM}_{\mathrm{LA}}\downarrow$& 11.2 & 10.4 & \underline{8.95} & \textbf{8.89} \\
& $\mathrm{SM}\downarrow$              & 6.92 & 6.50 & \underline{5.61} & \textbf{5.42} \\
\midrule

\multirow{9}{*}{\rotatebox[origin=c]{90}{Weave}}
& $\mathrm{CR}_{\mathrm{AA}}\downarrow$ & 5.46 & 5.45 & \underline{0.69} & \textbf{0.18} \\
& $\mathrm{CR}_{\mathrm{AM}}\downarrow$ & 2.29 & \underline{0.95} & 0.98 & \textbf{0.81} \\
& $\mathrm{CR}\downarrow$              & 7.75 & 6.40 & \underline{1.66} & \textbf{0.99} \\
\cmidrule(lr){2-6}
& $\mathrm{AS}\uparrow$                & 72.5 & \underline{79.2} & 78.2 & \textbf{81.8} \\
\cmidrule(lr){2-6}
& $\mathrm{SM}_{\mathrm{LO}}\downarrow$& 9.44 & 7.59 & \underline{5.58} & \textbf{5.29} \\
& $\mathrm{SM}_{\mathrm{LA}}\downarrow$& 16.5 & 14.9 & \underline{9.51} & \textbf{9.38} \\
& $\mathrm{SM}\downarrow$              & 12.9 & 11.3 & \underline{7.54} & \textbf{7.33} \\
\midrule

\multirow{9}{*}{\rotatebox[origin=c]{90}{Merge}}
& $\mathrm{CR}_{\mathrm{AA}}\downarrow$ & 4.37 & 4.28 & \underline{0.24} & \textbf{0.03} \\
& $\mathrm{CR}_{\mathrm{AM}}\downarrow$ & \textbf{0.00} & \textbf{0.00} & \textbf{0.00} & \textbf{0.00} \\
& $\mathrm{CR}\downarrow$              & 4.37 & 4.28 & \underline{0.24} & \textbf{0.03} \\
\cmidrule(lr){2-6}
& $\mathrm{AS}\uparrow$                & 78.6 & \underline{83.1} & 81.4 & \textbf{84.7} \\
\cmidrule(lr){2-6}
& $\mathrm{SM}_{\mathrm{LO}}\downarrow$& 5.85 & \underline{5.23} & 5.57 & \textbf{5.09} \\
& $\mathrm{SM}_{\mathrm{LA}}\downarrow$& 10.0 & 8.56 & \textbf{5.32} & \underline{6.58} \\
& $\mathrm{SM}\downarrow$              & 7.95 & 6.90 & \textbf{5.45} & \underline{5.84} \\
\midrule

\multirow{9}{*}{\rotatebox[origin=c]{90}{Bypass}}
& $\mathrm{CR}_{\mathrm{AA}}\downarrow$ & 4.80 & 5.02 & \underline{0.56} & \textbf{0.08} \\
& $\mathrm{CR}_{\mathrm{AM}}\downarrow$ & 0.78 & \textbf{0.03} & \underline{0.07} & \textbf{0.03} \\
& $\mathrm{CR}\downarrow$              & 5.57 & 5.06 & \underline{0.62} & \textbf{0.10} \\
\cmidrule(lr){2-6}
& $\mathrm{AS}\uparrow$                & 76.4 & \textbf{81.4} & 78.0 & \underline{79.4} \\
\cmidrule(lr){2-6}
& $\mathrm{SM}_{\mathrm{LO}}\downarrow$& 6.93 & \underline{6.18} & \textbf{5.94} & 7.44 \\
& $\mathrm{SM}_{\mathrm{LA}}\downarrow$& 14.5 & 14.4 & \textbf{9.02} & \underline{9.16} \\
& $\mathrm{SM}\downarrow$              & 10.7 & 10.3 & \textbf{7.48} & \underline{8.30} \\
\bottomrule
\end{tabular}
\end{table}

\subsection{Experiment Setting}
\label{subsec:exp_setting}

We compare TSC with three baselines: MFPO, SigmaRL, and XP-MARL. MFPO uses a mean-field
aggregation of nearby vehicles instead of explicit pairwise interactions, enabling scalable learning and execution as the number of agents grows \cite{yang2018mean,nayak2023scalable}. 
SigmaRL is a MARL approach that emphasizes sample-efficient and generalizable value learning for multi-vehicle motion planning \cite{Xu2024SigmaRL}.
XP-MARL extends SigmaRL with sequential coordination through auxiliary prioritization \cite{Xu2024XPMARL}. 
XP-MARL relies on a centralized controller to assign priorities and propagate higher-priority actions, whereas our method infers priorities locally under decentralized, communication-free execution. 

All methods are trained and evaluated in VMAS on the four environments. 
Each episode consists of a horizon of $1200$ steps with $\Delta t=0.05\,\mathrm{s}$. 
Under partial observability we
set $|\mathcal{N}_i^t|=4$ for priority inference and $|\mathcal{N}_{i,K}^t|=2$
for policy learning and execution. Each run is trained for $250$ iterations and
collects $4096$ environment steps per iteration, which yields about
$1.0\times10^{6}$ total steps. We set $\lambda_{\mathrm{node}}=1.0$ and
$\lambda_{\mathrm{cons}}=1.0$.
All experiments are conducted on an AMD Ryzen 9 7945HX CPU with 32\,GB RAM.

\subsection{Main Results}
\label{subsec:result}

\begin{figure*}[t]
    \centering
    \includegraphics[width=6.5in]{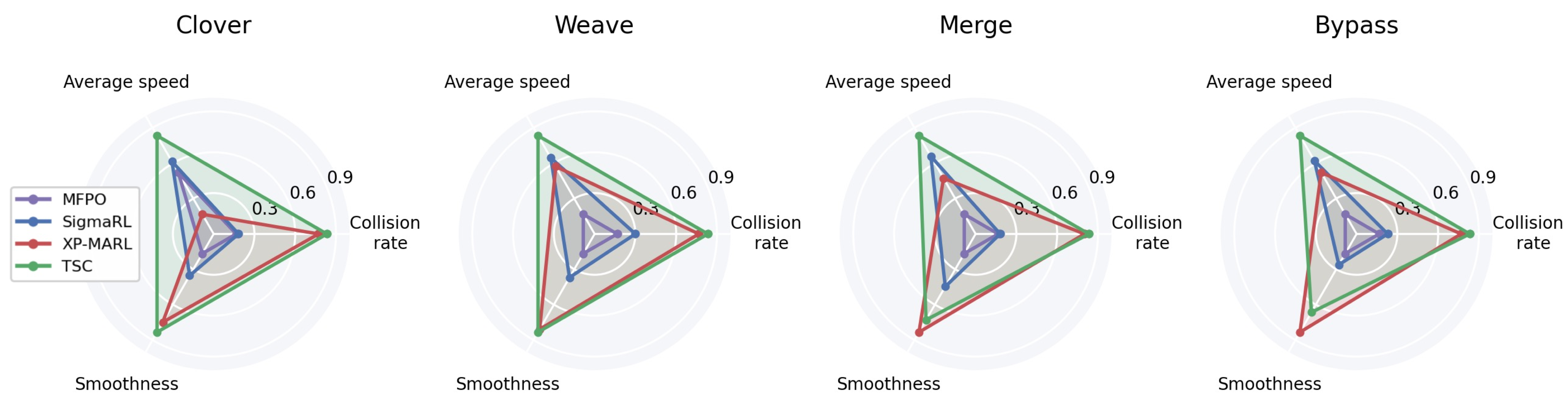}
    \caption{Comparison of safety, efficiency, and smoothness across four scenarios.}
  \label{fig:result1}
\end{figure*}

\begin{figure*}[t]
    \centering
    \includegraphics[width=6.5in]{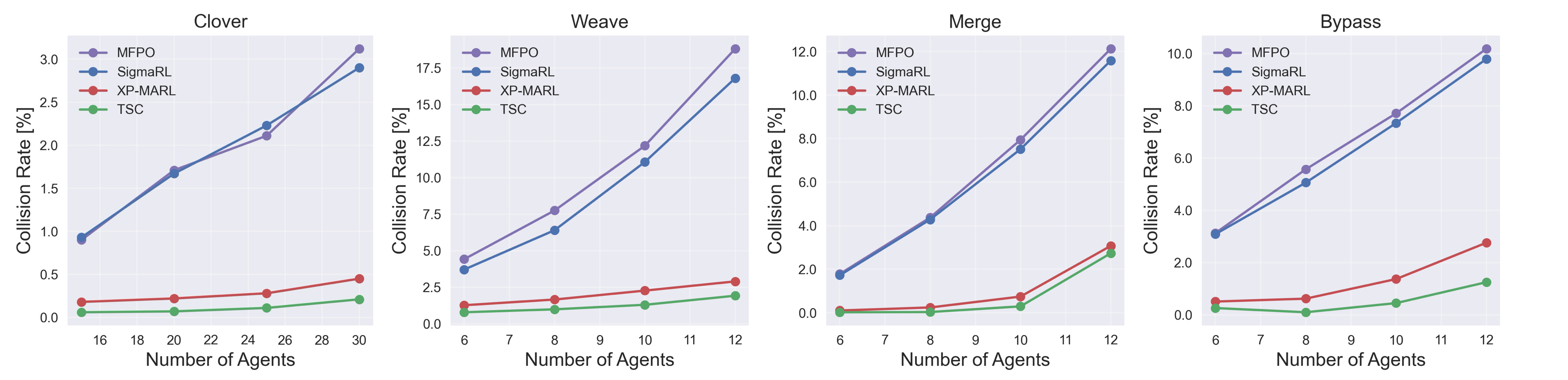}
    \caption{Collision rate of different methods under varying numbers of vehicles at test time.}
  \label{fig:result2}
\end{figure*}

\begin{figure*}[t]
    \centering
    \setlength{\belowcaptionskip}{-0.5cm}
    \includegraphics[width=6.6in]{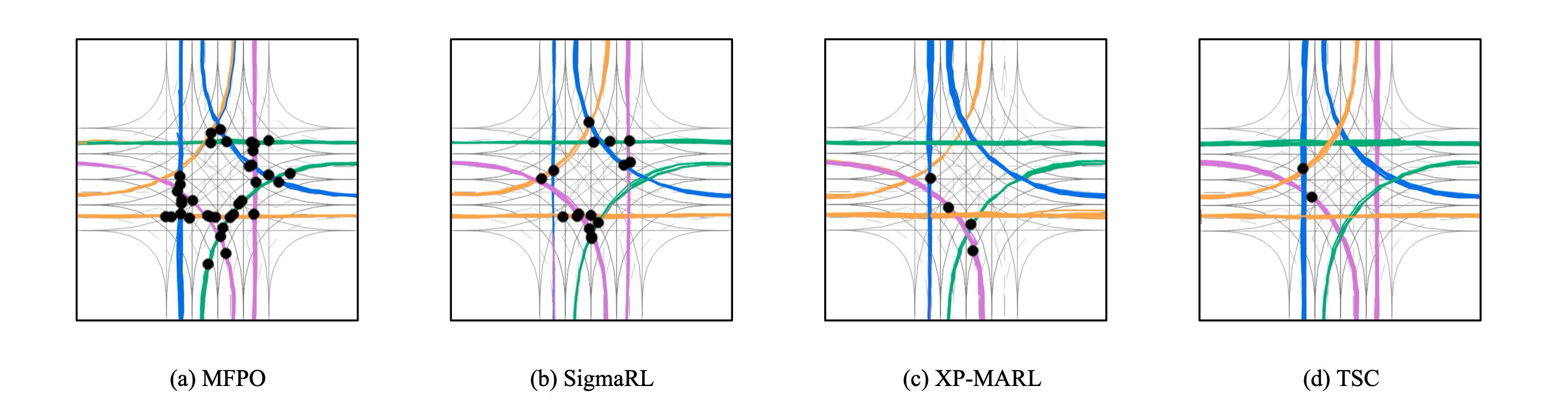}
    \caption{Visualization of vehicle trajectories in one test episode. Black dots indicate collision events. 
    }
  \label{fig:traj_vis}
\end{figure*}

Table~\ref{tab:detailed_results} summarizes performance on four scenarios. TSC achieves the lowest total collision rate in all scenarios while maintaining competitive average speed, which indicates that the safety improvement is not obtained by conservative slowdowns. The gain is most pronounced in \textit{Weave}, where TSC attains a collision rate of 0.99\% with the highest AS of 81.8. In comparison, MFPO and SigmaRL exhibit substantially higher collision rates in the same scenario, at 7.75\% and 6.40\% respectively. Similar safety improvements appear in \textit{Clover}, \textit{Merge}, and \textit{Bypass}, where TSC consistently reduces collisions to near-zero levels and preserves traffic efficiency.

Decomposed results show that the reduction is mainly driven by fewer agent-agent collisions. This is consistent with our design that infers a directed local priority graph and uses it to structure interactions under decentralized execution. For example, in \textit{Weave} the agent-agent collision rate drops to 0.18\% under TSC, compared with 5.46\% under MFPO and 5.45\% under SigmaRL. XP-MARL also improves safety through sequential coordination, but it relies on centralized priority management. TSC attains lower total collision rates across all scenarios under communication-free execution, suggesting that locally inferred priorities can replace centralized ordering in dense traffic. TSC also achieves the best or second best smoothness in most cases, with the main exceptions on \textit{Merge} and \textit{Bypass}, where XP-MARL yields smoother control on $SM_{LA}$ or $SM_{LO}$.

\begin{figure*}[t]
    \centering
    \includegraphics[width=6.5in]{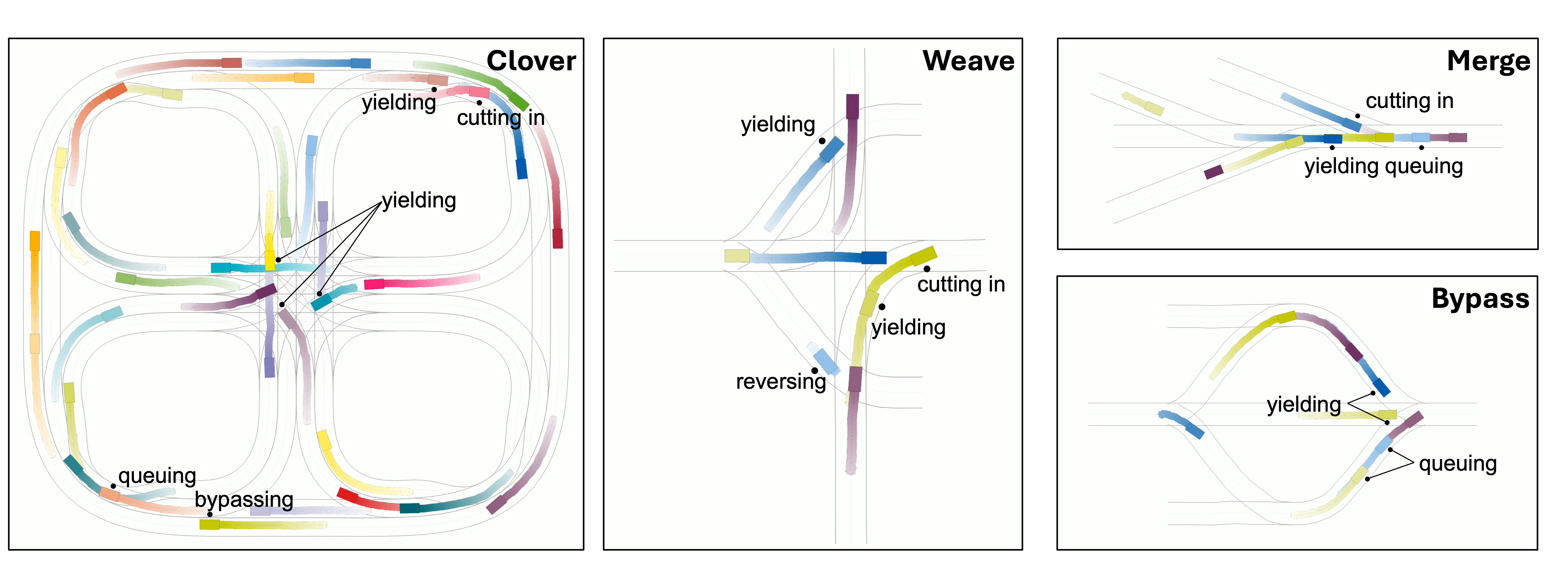}
    \caption{Qualitative trajectory visualization of TSC across four interaction scenarios, highlighting emergent behaviors including yielding, queuing, cutting in, bypassing, and reversing.
    For each vehicle, we plot a 25-step trajectory segment with distinct colors, where the luminosity fades from light to deep to indicate earlier to later positions.}
  \label{fig:behavior}
\end{figure*}

Fig.~\ref{fig:result1} presents radar plots over collision rate, average speed, and smoothness. For each scenario, we normalize each metric to its maximal and minimal values over all evaluated episodes and all methods, and use the negative collision rate for safety normalization. The plots show that XP-MARL and TSC both expand the safety axis, highlighting the benefit of sequential Stackelberg-style coordination, whereas TSC yields a more balanced profile across all axes due to topology-guided sparsification and graph-local leader--follower reasoning, which stabilizes interactions and avoids over-conditioning on centralized action propagation.
% decentralized, communication-free priority inference.

To further evaluate the generalization under varying traffic density, we execute the learned policies in the same test scenarios while changing the number of vehicles. As shown in Fig.~\ref{fig:result2}, MFPO and SigmaRL are more sensitive to the number of vehicles, with collision rates increasing rapidly as the traffic becomes denser. In contrast, XP-MARL and TSC maintain substantially lower collision rates across all tested densities. 
Notably, TSC consistently yields the lowest collision rates and exhibits the slowest degradation as the number of vehicles increases, suggesting that the topology-conditioned Stackelberg decision structure enables scalable, stable coordination by converting dense couplings into sparse leader-follower dependencies.

\subsection{Behavior Analysis}
\label{subsec:analysis}

In Fig.~\ref{fig:traj_vis}, MFPO and SigmaRL show dense trajectory crossings and collision clusters around the conflict region, which indicates unresolved right-of-way interactions under decentralized control. XP-MARL reduces collisions by enforcing sequential reactions, yet residual collision markers remain near merging and crossing areas. TSC produces clearer precedence patterns with better separated traffic streams and only sparse collision events, illustrating how topology-conditioned priority inference can stabilize local leader follower responses without communication.

Across four scenarios, Fig.~\ref{fig:behavior} highlights recurrent interaction patterns induced by TSC, including yielding and queuing near conflict points, and proactive gap creation behaviors such as cutting in, bypassing, and occasional reversing. These behaviors arise from local observations with a shared policy, showing that the inferred priority graph provides a compact coordination signal that transfers across different geometries and traffic densities.

\begin{table}[htbp]
  \centering
  \caption{Ablation study on total collision rate  (CR$[\%]$) across different scenarios.}
  \label{tab:ablation_collision}
  \setlength{\tabcolsep}{6pt}
    \renewcommand{\arraystretch}{1.2}
    \small
  \begin{tabular}{lcccc}
    \toprule
    \textbf{Method} & \textbf{Clover} & \textbf{Weave} & \textbf{Merge} & \textbf{Bypass} \\
    \midrule
    TSC w/ random priority & 0.30 & 1.62 & 1.11 & 1.79 \\
    TSC w/o Stackelberg & 0.27 & 1.08 & 0.78 & 1.91 \\
    TSC w/o Top-$K$ Filter & 1.00 & 2.92 & 0.97 & 1.87 \\
    \hline
    \textbf{TSC} & \textbf{0.07} & \textbf{0.99} & \textbf{0.03} & \textbf{0.10} \\
    \bottomrule
  \end{tabular}
\end{table}

\subsection{Ablation Studies}

We conduct ablation studies to verify the effectiveness of the proposed coordination design. Table~\ref{tab:ablation_collision} compares TSC with three variants. “TSC with random priority” replaces the inferred topological priorities with random pairwise priorities, which consistently increases collisions, especially in \textit{Weave} and \textit{Bypass}. This indicates that the priority structure must reflect the interaction topology rather than being arbitrary. “TSC without Stackelberg” removes the leader–follower conditioning and yields higher collision rates in most scenarios, suggesting that the Stackelberg-style dependency helps stabilize interactive decisions under non-stationarity. “TSC without Top-K filter” disables topological sparse selection and considers all neighbors, leading to the worst safety outcomes across scenarios, showing that sparse selection is not only a computational choice but also a safety-critical mechanism that suppresses irrelevant interactions.

Fig.~\ref{fig:collision-curve} further studies the influence of  neighborhood size for sparse selection on agent-agent collision rate. We compare TopoNbr-1/2/3/5, where the policy conditions on the topologically most relevant K neighbors.
As illustrated in Fig.~\ref{fig:collision-curve}, TopoNbr-2 achieves the lowest collision rate and the fastest reduction during training, indicating improved stability and sample efficiency. TopoNbr-1 tends to miss secondary conflict partners, While larger K includes weakly coupled neighbors, which dilutes the influence of the truly critical vehicles and injects inconsistent interaction cues, resulting in less stable learning and more agent–agent collisions. Overall, the results confirm that meaningful topological priorities, Stackelberg-style conditioning, and a small but informative neighborhood jointly contribute to safer coordination.

\begin{figure}[t]
    \centering
    \setlength{\belowcaptionskip}{-0.5cm}
    \includegraphics[width=3.2in]{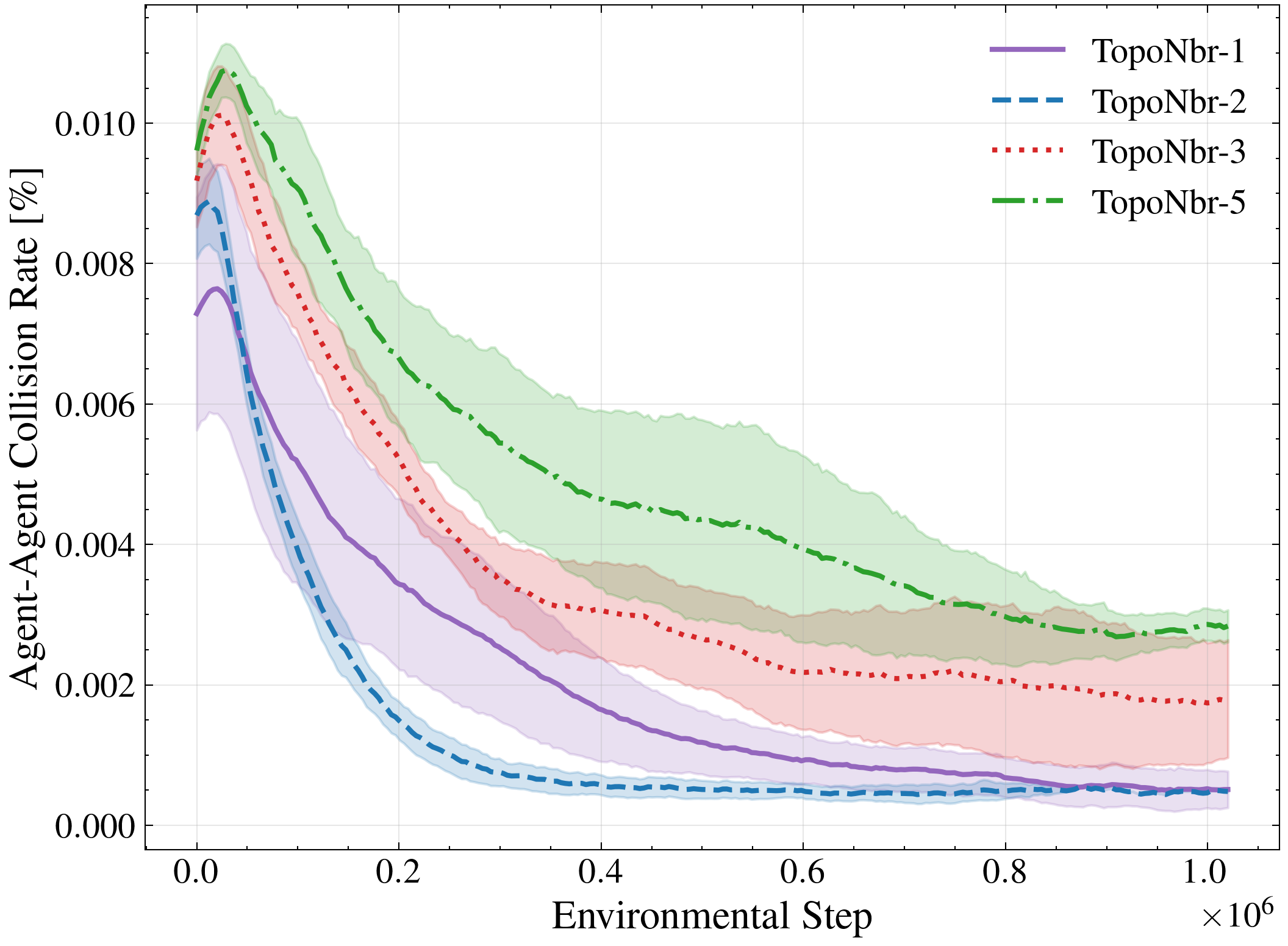}
    \caption{Comparison of TSC with different TopoNbr-K settings.}
  \label{fig:collision-curve}
\end{figure}

\section{Conclusion}

This paper introduced TSC, a topology-conditioned Stackelberg coordination framework for resolving spatiotemporal conflicts among decentralized autonomous agents in dense traffic. A weaving-induced directed priority graph encodes asymmetric local precedence, and its edges are interpreted as leader–follower constraints that couple local Stackelberg subgames without constructing a global order of play. Based on this formulation, TSC-Net infers ego-centric priority structures from local observations and leverages predicted leader behaviors to train a Stackelberg-conditioned actor–critic under CTDE, enabling communication-free execution.

Empirical results across representative dense traffic scenarios demonstrate substantial collision reduction while maintaining competitive system efficiency and smooth control, and show slower performance degradation under increased agent densities. 
% Real-world deployment further supports that the learned priorities induce clear yielding and passing behaviors without explicit communication. 
Future work will consider richer heterogeneity such as mixed dynamics and intent classes, improved robustness to perception uncertainty and occlusion, and the integration of explicit safety constraints or formal guarantees.

\bibliographystyle{IEEEtran}
\bibliography{TSC}

\appendix

\subsection{Bellman error bound under leader-prediction noise}

Fix a follower agent $i$ and its local leader set $\mathcal{L}_i^t$ defined in Sec.~\ref{subsubsec:stackelberg-coordination}.
Let $T_i^{\mathrm{SE}}$ denote the Bellman evaluation operator of the local Stackelberg MDP in \eqref{eq:se-bellman}, where leader actions are the realized actions $a_j^t\sim\pi_\theta(\cdot\mid u_j^t)$ for all $j\in\mathcal{L}_i^t$.
Let $T_i^{\mathrm{SE,pred}}$ denote an auxiliary operator obtained by replacing these leader actions in the Bellman backup by the corresponding predictions $\hat{a}_{j\mid i}^t$ for all $j\in\mathcal{L}_i^t$.
Let $V_i^{\mathrm{SE}}$ and $V_i^{\mathrm{SE,pred}}$ be the unique fixed points of $T_i^{\mathrm{SE}}$ and $T_i^{\mathrm{SE,pred}}$, respectively.

\begin{lemma}[Bellman error bound]
\label{lem:se-bellman-bound}
Assume the per-step reward is bounded in magnitude by $|r_i^t|\le R_{\max}$, so any discounted value function under these operators satisfies $\|V\|_\infty\le V_{\max}$ with $V_{\max}=R_{\max}/(1-\gamma)$.
Suppose that for some $\varepsilon_T \ge 0$ we have
\begin{equation}
    \bigl\|
        T_i^{\mathrm{SE}}V - T_i^{\mathrm{SE,pred}}V
    \bigr\|_\infty
    \le \varepsilon_T
\end{equation}
for all bounded $V$ with $\|V\|_\infty\le V_{\max}$.
Then
\begin{equation}
    \bigl\|
        V_i^{\mathrm{SE}} - V_i^{\mathrm{SE,pred}}
    \bigr\|_\infty
    \le
    \frac{\varepsilon_T}{1-\gamma}.
    \label{eq:bellman-bound}
\end{equation}
\end{lemma}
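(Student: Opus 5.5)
The plan is the standard simulation-lemma argument: use the fixed-point identities and the $\gamma$-contraction of one of the two operators, then absorb the one-step operator discrepancy $\varepsilon_T$.

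First, I will record that both $T_i^{\mathrm{SE}}$ and $T_i^{\mathrm{SE,pred}}$ are $\gamma$-contractions in $\|\cdot\|_\infty$. Each has the form $(TV)(u)=\mathbb{E}[r_i^t+\gamma V(u_i^{t+1})\mid u]$ under some fixed conditional law of the next decision state; the two laws differ only in whether leader actions are realized or predicted. For either law, $|(TV)(u)-(TV')(u)|\le\gamma\,\mathbb{E}|V-V'|\le\gamma\|V-V'\|_\infty$. This is also why the fixed points are unique, as stipulated before the statement.

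Second, I will check that the hypothesis can be applied at $V=V_i^{\mathrm{SE,pred}}$. Since $V_i^{\mathrm{SE,pred}}$ is a discounted value with rewards bounded by $R_{\max}$, it satisfies $\|V_i^{\mathrm{SE,pred}}\|_\infty\le V_{\max}$, as the lemma itself asserts. If one prefers a derivation, note that $T_i^{\mathrm{SE,pred}}$ maps the ball $\{\|V\|_\infty\le V_{\max}\}$ into itself, because $R_{\max}+\gamma V_{\max}=V_{\max}$. The ball is closed, so the fixed point lies in it.

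Third, I will write
\[
V_i^{\mathrm{SE}}-V_i^{\mathrm{SE,pred}} = \bigl(T_i^{\mathrm{SE}}V_i^{\mathrm{SE}}-T_i^{\mathrm{SE}}V_i^{\mathrm{SE,pred}}\bigr)+\bigl(T_i^{\mathrm{SE}}V_i^{\mathrm{SE,pred}}-T_i^{\mathrm{SE,pred}}V_i^{\mathrm{SE,pred}}\bigr).
\]
Taking sup norms, the first bracket is at most $\gamma\|V_i^{\mathrm{SE}}-V_i^{\mathrm{SE,pred}}\|_\infty$ by the contraction property, and the second is at most $\varepsilon_T$ by the hypothesis. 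Rearranging gives $(1-\gamma)\|V_i^{\mathrm{SE}}-V_i^{\mathrm{SE,pred}}\|_\infty\le\varepsilon_T$, which is \eqref{eq:bellman-bound}. Dividing by $1-\gamma$ is legitimate only when $\gamma<1$. Although the problem setting allows $\gamma=1$, the lemma already uses $V_{\max}=R_{\max}/(1-\gamma)$, so I will state that $\gamma\in(0,1)$ is implicitly assumed.

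The main obstacle is not the algebra but justifying the contraction of $T_i^{\mathrm{SE}}$ on the local state $u_i$. The dynamics of $u_i^{t+1}$ are not Markov in $u_i$ alone, since they depend on the other agents. I will handle this by taking the operators as given, that is, as well-defined evaluation operators under a fixed induced transition kernel, as the setup preceding the lemma does. Monotonicity and the $\gamma$-scaling of the expectation are all that is needed, and they hold for any Markov kernel.
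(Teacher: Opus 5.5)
Your proof is correct and is essentially the paper's argument: fixed-point identities, one triangle inequality with an inserted intermediate term, the contraction property, and rearranging. The only difference is a mirror-image choice. The paper inserts $T_i^{\mathrm{SE,pred}}V_i^{\mathrm{SE}}$, applying the hypothesis at $V_i^{\mathrm{SE}}$ and the contraction of $T_i^{\mathrm{SE,pred}}$, whereas you insert $T_i^{\mathrm{SE}}V_i^{\mathrm{SE,pred}}$; your explicit checks that the fixed point lies in the $V_{\max}$-ball and that $\gamma<1$ are points the paper leaves implicit.
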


\begin{proof}
Recall that both $T_i^{\mathrm{SE}}$ and $T_i^{\mathrm{SE,pred}}$ are
$\gamma$-contractions in the sup norm.
Using $V_i^{\mathrm{SE}} = T_i^{\mathrm{SE}}V_i^{\mathrm{SE}}$ and
$V_i^{\mathrm{SE,pred}} = T_i^{\mathrm{SE,pred}}V_i^{\mathrm{SE,pred}}$, we
obtain
\begin{equation}
\begin{aligned}
\bigl\|
    V_i^{\mathrm{SE}} - V_i^{\mathrm{SE,pred}}
\bigr\|_\infty
&=
\bigl\|
    T_i^{\mathrm{SE}}V_i^{\mathrm{SE}}
    -
    T_i^{\mathrm{SE,pred}}V_i^{\mathrm{SE,pred}}
\bigr\|_\infty \\
&\le
\bigl\|
    T_i^{\mathrm{SE}}V_i^{\mathrm{SE}}
    -
    T_i^{\mathrm{SE,pred}}V_i^{\mathrm{SE}}
\bigr\|_\infty \\
& +
\bigl\|
    T_i^{\mathrm{SE,pred}}V_i^{\mathrm{SE}}
    -
    T_i^{\mathrm{SE,pred}}V_i^{\mathrm{SE,pred}}
\bigr\|_\infty \\
&\le
\varepsilon_T
+
\gamma
\bigl\|
    V_i^{\mathrm{SE}} - V_i^{\mathrm{SE,pred}}
\bigr\|_\infty,
\end{aligned}
\end{equation}
where the first term is bounded by assumption and the second term uses
the contraction property of $T_i^{\mathrm{SE,pred}}$.
Rearranging gives
$
(1-\gamma)
\|V_i^{\mathrm{SE}} - V_i^{\mathrm{SE,pred}}\|_\infty
\le \varepsilon_T
$,
which yields \eqref{eq:bellman-bound}.
\end{proof}

Lemma~\ref{lem:se-bellman-bound} shows that if the Bellman backup error
caused by leader-action prediction is uniformly small, then the induced
value $V_i^{\mathrm{SE,pred}}$ is uniformly close to the ideal local
Stackelberg value $V_i^{\mathrm{SE}}$, with error amplified by at most
the factor $1/(1-\gamma)$.

\subsection{Policy improvement in the local Stackelberg MDP}

For a follower policy $\pi_i$ and fixed leader policies
$\pi_{\mathcal{L}_i^t}=\{\pi_j\}_{j\in\mathcal{L}_i^t}$,
the local Stackelberg follower objective is
\begin{equation}
\begin{aligned}
    J_i^{\mathrm{SE}}(\pi_i;\pi_{\mathcal{L}_i^t})
    = &
    \mathbb{E}\Bigl[
        \sum_{k=0}^{\infty}\gamma^k r_i^{t+k}
        \,\Bigm|\,
        a_i^{t+k} \sim \pi_i(\cdot\mid u_i^{t+k}),\ \\
        & a_j^{t+k} \sim \pi_j(\cdot\mid u_j^{t+k}),\ \forall j\in\mathcal{L}_i^t
    \Bigr],
    \label{eq:local-se-objective}
\end{aligned}
\end{equation}
where the expectation is taken over trajectories induced by
$(\pi_i,\pi_{\mathcal{L}_i^t})$.
Let $V_{i,\pi}^{\mathrm{SE}}$ and $Q_{i,\pi}^{\mathrm{SE}}$ denote the
corresponding value and action value functions, and define the advantage
\begin{equation}
    A_{i,\pi}^{\mathrm{SE}}(u,a)
    =
    Q_{i,\pi}^{\mathrm{SE}}(u,a)
    -
    V_{i,\pi}^{\mathrm{SE}}(u).
\end{equation}

\begin{lemma}[Performance difference lemma for the local Stackelberg MDP]
\label{lem:se-pdl}
Consider the local Stackelberg MDP of agent $i$ with fixed leader policies $\pi_{\mathcal{L}_i^t} = \{\pi_j\}_{j\in\mathcal{L}_i^t}$ at a given time step $t$ and its associated leader set $\mathcal{L}_i^t$.
For any two follower policies $\pi_i,\tilde\pi_i$ and the corresponding discounted state visitation distribution $d_{\pi_i}$ induced by $(\pi_i,\pi_{\mathcal{L}_i^t})$, we have
\begin{equation}
\begin{aligned}
    J_i^{\mathrm{SE}}(\pi_i;\pi_{\mathcal{L}_i^t})
    -
    & J_i^{\mathrm{SE}}(\tilde\pi_i;\pi_{\mathcal{L}_i^t})
    = \\
    & \frac{1}{1-\gamma}\,
    \mathbb{E}_{u\sim d_{\pi_i},\,a\sim\pi_i(\cdot\mid u)}
    \bigl[
        A_{i,\tilde\pi_i}^{\mathrm{SE}}(u,a)
    \bigr],
    \label{eq:app-se-pdl-action}
\end{aligned}
\end{equation}
where
\begin{equation}
    d_{\pi_i}(u)
    =
    (1-\gamma)\sum_{k=0}^{\infty}
    \gamma^k\,\Pr\bigl(u_i^{t+k} = u \mid \pi_i,\pi_{\mathcal{L}_i^t}\bigr).
\end{equation}

If $\pi_i$ is deterministic, $a=\pi_i(u)$, we may write
$A_{i,\tilde\pi_i}^{\mathrm{SE}}(u)
 := A_{i,\tilde\pi_i}^{\mathrm{SE}}(u,\pi_i(u))$ and obtain
\begin{equation}
    J_i^{\mathrm{SE}}(\pi_i;\pi_{\mathcal{L}_i^t})
    -
    J_i^{\mathrm{SE}}(\tilde\pi_i;\pi_{\mathcal{L}_i^t})
    =
    \frac{1}{1-\gamma}\,
    \mathbb{E}_{u\sim d_{\pi_i}}
    \bigl[
        A_{i,\tilde\pi_i}^{\mathrm{SE}}(u)
    \bigr].
    \label{eq:app-se-pdl-state}
\end{equation}
\end{lemma}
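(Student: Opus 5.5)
The plan is the standard telescoping argument of Kakade and Langford, applied to the local Stackelberg MDP of agent $i$. With the leader policies $\pi_{\mathcal{L}_i^t}$ held fixed, the pair (follower state $u$, follower action $a$) evolves as a single-agent MDP. Its transition kernel $P(u'\mid u,a)$ is obtained by marginalizing the leaders' actions $a_j\sim\pi_j(\cdot\mid u_j)$ and the environment transition $\mathcal{P}$. Its expected reward is $\bar r_i(u,a)$, the conditional mean of $r_i$ given $(u,a)$ under the leaders' policies. I will treat this induced kernel as Markov in $u$ (the standing modelling assumption behind \eqref{eq:se-bellman}). Then $Q^{\mathrm{SE}}_{i,\tilde\pi_i}(u,a)=\bar r_i(u,a)+\gamma\,\mathbb{E}_{u'\sim P(\cdot\mid u,a)}[V^{\mathrm{SE}}_{i,\tilde\pi_i}(u')]$ holds. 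Also, $J_i^{\mathrm{SE}}$ equals $\mathbb{E}[V^{\mathrm{SE}}_{i,\cdot}(u_i^t)]$ under a common initial distribution of $u_i^t$.

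First, take a trajectory $u^0=u_i^t, a^0, u^1,\dots$ generated by $(\pi_i,\pi_{\mathcal{L}_i^t})$ and expand $J_i^{\mathrm{SE}}(\pi_i)=\mathbb{E}[\sum_k\gamma^k r^{k}]$. Add and subtract the telescoping sum $\sum_k\bigl(\gamma^{k+1}V^{\mathrm{SE}}_{i,\tilde\pi_i}(u^{k+1})-\gamma^k V^{\mathrm{SE}}_{i,\tilde\pi_i}(u^k)\bigr)$. It collapses to $-V^{\mathrm{SE}}_{i,\tilde\pi_i}(u^0)$ because $\gamma^k V\to0$, using boundedness $\|V\|_\infty\le V_{\max}$ as in Lemma~\ref{lem:se-bellman-bound} with $\gamma<1$. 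Its expectation over $u^0$ is $-J_i^{\mathrm{SE}}(\tilde\pi_i)$. This gives
\begin{equation*}
J_i^{\mathrm{SE}}(\pi_i)-J_i^{\mathrm{SE}}(\tilde\pi_i)=\mathbb{E}_{\pi_i}\Bigl[\sum_{k\ge0}\gamma^k\bigl(r^k+\gamma V^{\mathrm{SE}}_{i,\tilde\pi_i}(u^{k+1})-V^{\mathrm{SE}}_{i,\tilde\pi_i}(u^k)\bigr)\Bigr].
\end{equation*}

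Second, condition each summand on $(u^k,a^k)$ and use the tower property. The inner expectation of $r^k+\gamma V^{\mathrm{SE}}_{i,\tilde\pi_i}(u^{k+1})$ is exactly $Q^{\mathrm{SE}}_{i,\tilde\pi_i}(u^k,a^k)$, so each term becomes $\gamma^k A^{\mathrm{SE}}_{i,\tilde\pi_i}(u^k,a^k)$. Fubini (justified by absolute summability from the bounded rewards) exchanges sum and expectation. Regrouping $\sum_k\gamma^k\Pr(u^k=u)$ as $d_{\pi_i}(u)/(1-\gamma)$, with $a\sim\pi_i(\cdot\mid u)$, yields \eqref{eq:app-se-pdl-action}. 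The deterministic case \eqref{eq:app-se-pdl-state} follows immediately by substituting the Dirac policy.

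The main obstacle is justifying the first step's reduction, namely that fixing the leader policies gives a genuine stationary MDP in the follower's state $u$. The leader set $\mathcal{L}_i^t$ is time-indexed, and the leaders' inputs $u_j$ are not functions of $u_i$. I will handle this by freezing $\mathcal{L}_i^t$ as in \eqref{eq:local-se-objective} and defining the kernel and rewards through marginalization over the leaders' states and actions. If exact Markovity fails, the same telescoping argument still goes through with $V$ and $Q$ defined on histories, or on the full augmented state, before being projected to $u$.
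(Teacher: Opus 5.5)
Your proposal is correct and follows essentially the same route as the paper. The paper writes $A^{\mathrm{SE}}_{i,\tilde\pi_i}(u,a)=\mathbb{E}[r_i^t+\gamma V^{\mathrm{SE}}_{i,\tilde\pi_i}(u_i^{t+1})-V^{\mathrm{SE}}_{i,\tilde\pi_i}(u_i^t)\mid u_i^t=u,a_i^t=a]$, sums it with weights $\gamma^k$ along $(\pi_i,\pi_{\mathcal{L}_i^t})$-trajectories, telescopes the value terms, and regroups via $d_{\pi_i}$, which is your argument run in the opposite direction; your extra care ($\gamma<1$ and bounded values for the vanishing tail, Fubini, and freezing $\mathcal{L}_i^t$ to get a Markov kernel in $u$) only makes explicit what the paper leaves implicit by positing a local Stackelberg MDP, and your history/augmented-state fallback is unnecessary and would in any case yield an advantage on the augmented state rather than the stated $A^{\mathrm{SE}}_{i,\tilde\pi_i}(u,a)$.
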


\begin{proof}
Fix the leader policies $\pi_{\mathcal{L}_i^t}$.
Let $V_{i,\tilde\pi_i}^{\mathrm{SE}}$ be the value function under
$(\tilde\pi_i,\pi_{\mathcal{L}_i^t})$.
For any state $u$ and action $a$, define
\begin{equation}
    Q_{i,\tilde\pi_i}^{\mathrm{SE}}(u,a)
    =
    \mathbb{E}\Bigl[
        r_i^{t}
        +
        \gamma
        V_{i,\tilde\pi_i}^{\mathrm{SE}}(u_i^{t+1})
        \Bigm|
        u_i^{t}=u,a_i^{t}=a,\pi_{\mathcal{L}_i^t}
    \Bigr].
\end{equation}
By the definition of advantage,

\begin{equation}
\begin{aligned}
    A_{i,\tilde\pi_i}^{\mathrm{SE}}(u,a)
    & = 
    \mathbb{E}\Bigl[
        r_i^{t}
        +
        \gamma
        V_{i,\tilde\pi_i}^{\mathrm{SE}}(u_i^{t+1})
        \\ & -
        V_{i,\tilde\pi_i}^{\mathrm{SE}}(u_i^{t})
        \,\Bigm|\,
        u_i^{t}=u,\ a_i^{t}=a,\ \pi_{\mathcal{L}_i^t}
    \Bigr].
\end{aligned}
\end{equation}

Now consider trajectories generated by $(\pi_i,\pi_{\mathcal{L}_i^t})$
starting from time $t$.
Taking conditional expectation of the previous identity at each time
$t+k$ and summing with weights $\gamma^k$ gives
\begin{equation}
\begin{aligned}
\mathbb{E}\Bigl[
\sum_{k=0}^{\infty}\gamma^k
A_{i,\tilde\pi_i}^{\mathrm{SE}}(u_i^{t+k},a_i^{t+k})
\Bigr]
= & \mathbb{E}\Bigl[
\sum_{k=0}^{\infty}\gamma^k r_i^{t+k}
+ \\
\sum_{k=0}^{\infty}\gamma^{k+1}V_{i,\tilde\pi_i}^{\mathrm{SE}}(u_i^{t+k+1})
- &
\sum_{k=0}^{\infty}\gamma^{k}V_{i,\tilde\pi_i}^{\mathrm{SE}}(u_i^{t+k})
\Bigr].
\end{aligned}
\end{equation}

The two value sums telescope, leaving
\begin{equation}
\mathbb{E}\Bigl[
\sum_{k=0}^{\infty}\gamma^k
A_{i,\tilde\pi_i}^{\mathrm{SE}}(u_i^{t+k},a_i^{t+k})
\Bigr]
=
J_i^{\mathrm{SE}}(\pi_i;\pi_{\mathcal{L}_i^t})
-
J_i^{\mathrm{SE}}(\tilde\pi_i;\pi_{\mathcal{L}_i^t}),
\end{equation}
where the second term appears because under
$(\tilde\pi_i,\pi_{\mathcal{L}_i^t})$,
$V_{i,\tilde\pi_i}^{\mathrm{SE}}(u_i^{t})$ equals the discounted return
starting from state $u_i^{t}$.

Finally, rewrite the left-hand side using the discounted visitation
distribution.
By definition of $d_{\pi_i}$,
\begin{equation}
\begin{aligned}
\mathbb{E}\Bigl[
\sum_{k=0}^{\infty}\gamma^k
A_{i,\tilde\pi_i}^{\mathrm{SE}} &(u_i^{t+k},a_i^{t+k})
\Bigr]
= \\
& \frac{1}{1-\gamma}
\mathbb{E}_{u\sim d_{\pi_i},\,a\sim\pi_i(\cdot\mid u)}
\bigl[
A_{i,\tilde\pi_i}^{\mathrm{SE}}(u,a)
\bigr],
\end{aligned}
\end{equation}
which yields \eqref{eq:app-se-pdl-action}.
The deterministic case \eqref{eq:app-se-pdl-state} follows by substituting
$a=\pi_i(u)$.
\end{proof}

% \begin{corollary}[]
% % Fix the leader policies $\pi_{\mathcal{L}_i^t} = \{\pi_j\}_{j\in\mathcal{L}_i^t}$ and consider the local Stackelberg MDP of follower $i$.
% Let $V_\phi^i\bigl(u_i^t,\{\hat{a}_{j\mid i}^t\}_{j\in\mathcal{L}_i^t}\bigr)$ denote the Stackelberg-conditioned critic in~\eqref{eq:valhead}.
% If $V_\phi^i$ matches the local Stackelberg state-value $V_i^{\mathrm{SE}}(u_i^t)$ on the support of the discounted visitation distribution $d_{\pi_i}$, then the policy-gradient update based on advantages estimated from $V_\phi^i$ is a stochastic gradient-ascent step on the follower objective $J_i^{\mathrm{SE}}(\pi_i;\pi_{\mathcal{L}_i^t})$ in~\eqref{eq:local-se-objective}.
% In particular, for fixed higher-priority neighbors $\mathcal{L}_i^t$, the actor update with the Stackelberg-conditioned critic realizes a local Stackelberg follower best response, while the evolving leader policies $\pi_{\mathcal{L}_i^t}$ implicitly specify the upper level of a decentralized bilevel Stackelberg optimization.
% \end{corollary}

\begin{corollary}[]
Let $V_\phi^i\bigl(u_i^t,\{\hat{a}_{j\mid i}^t\}_{j\in\mathcal{L}_i^t}\bigr)$ denote the Stackelberg-conditioned critic in~\eqref{eq:valhead}.
If $V_\phi^i$ matches the predicted local Stackelberg state-value $V_i^{\mathrm{SE,pred}}(u_i^t)$ on the support of the discounted visitation distribution $d_{\pi_i}$, then the policy-gradient update based on advantages estimated from $V_\phi^i$ is a stochastic gradient-ascent step on the predicted follower objective $J_i^{\mathrm{SE,pred}}(\pi_i;\pi_{\mathcal{L}_i^t})$ induced by $T_i^{\mathrm{SE,pred}}$.
Moreover, when the induced bias is controlled as in Lemma~\ref{lem:se-bellman-bound}, this update provides a tractable approximation to improving the ideal follower objective $J_i^{\mathrm{SE}}(\pi_i;\pi_{\mathcal{L}_i^t})$.
In particular, for fixed higher-priority neighbors $\mathcal{L}_i^t$, the actor update with the Stackelberg-conditioned critic realizes a local Stackelberg follower best response, while the evolving leader policies $\pi_{\mathcal{L}_i^t}$ implicitly specify the upper level of a decentralized bilevel Stackelberg optimization.
\end{corollary}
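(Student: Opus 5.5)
The argument has three parts: a policy-gradient identity in the predicted local MDP, a transfer step using Lemma~\ref{lem:se-bellman-bound}, and a best-response interpretation.

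\textbf{Step 1: exact gradient in the predicted local MDP.} I will treat the local Stackelberg MDP induced by $T_i^{\mathrm{SE,pred}}$ as an ordinary single-agent MDP for the follower. Its transition is obtained by substituting the predicted leader actions $\hat{a}_{j\mid i}^t$, $j\in\mathcal{L}_i^t$, into $\mathcal{P}$, and the leader set is held fixed. On this MDP I define $Q_i^{\mathrm{SE,pred}}$ and $A_i^{\mathrm{SE,pred}}=Q_i^{\mathrm{SE,pred}}-V_i^{\mathrm{SE,pred}}$ exactly as in the setup of Lemma~\ref{lem:se-pdl}.

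I then apply Lemma~\ref{lem:se-pdl} to this MDP with $\tilde\pi_i=\pi_{\theta}$ and $\pi_i=\pi_{\theta+\delta}$. Differentiating at $\delta=0$ gives the standard policy-gradient theorem:
\begin{equation*}
\nabla_\theta J_i^{\mathrm{SE,pred}}=\frac{1}{1-\gamma}\,\mathbb{E}_{u\sim d_{\pi_\theta},\,a\sim\pi_\theta}\bigl[\nabla_\theta\log\pi_\theta(a\mid u)\,A_i^{\mathrm{SE,pred}}(u,a)\bigr].
\end{equation*}
The derivative falls only on the sampling policy, because $d_{\pi_i}$ contributes zero at first order when $\mathbb{E}_{a\sim\pi_\theta}[A]=0$. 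Alternatively, the identity can be derived directly by telescoping.

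\textbf{Step 2: the critic-based estimator is unbiased.} By hypothesis, $V_\phi^i=V_i^{\mathrm{SE,pred}}$ on the support of $d_{\pi_i}$. The TD advantage is $\hat{A}_i^t=r_i^t+\gamma V_\phi^i(u_i^{t+1})-V_\phi^i(u_i^t)$. Its conditional expectation given $(u,a)$ is therefore $A_i^{\mathrm{SE,pred}}(u,a)$. This needs $u_i^{t+1}$ to remain in the support, which holds almost surely under the on-policy visitation.

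Consequently, $-\nabla_\theta\mathcal{L}_{\mathrm{policy}}$, computed on on-policy samples, is an unbiased estimate of $\nabla_\theta J_i^{\mathrm{SE,pred}}$ up to the positive factor $(1-\gamma)$. This establishes the first claim of the corollary. I will note that replay-buffer data would require an importance correction, so I state the result for on-policy sampling.

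\textbf{Step 3: transfer to the ideal objective.} Here I invoke Lemma~\ref{lem:se-bellman-bound}. For every follower policy,
\begin{equation*}
|J_i^{\mathrm{SE}}(\pi_i)-J_i^{\mathrm{SE,pred}}(\pi_i)|\le\frac{\varepsilon_T}{1-\gamma},
\end{equation*}
which follows by taking the expectation of the sup-norm bound over the initial $u_i^0$. Hence an ascent step on $J_i^{\mathrm{SE,pred}}$ that improves it by more than $2\varepsilon_T/(1-\gamma)$ also improves $J_i^{\mathrm{SE}}$. Likewise, any maximizer of $J_i^{\mathrm{SE,pred}}$ is a $2\varepsilon_T/(1-\gamma)$-best response for $J_i^{\mathrm{SE}}$.

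\textbf{Step 4: best-response interpretation.} With $\pi_{\mathcal{L}_i^t}$ frozen, stationary points of this ascent are best responses in the sense of \eqref{eq:local-br}, in the approximate sense established in Step 3. Treating the leaders' own updates as the outer level gives the bilevel reading of \eqref{eq:classical-stackelberg}. This part is interpretive, and I will present it as such.

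\textbf{Main obstacle.} The hardest point is making Lemma~\ref{lem:se-bellman-bound} apply uniformly across policies. Its hypothesis bounds the operators for a fixed evaluation policy, but the best-response comparison needs $\varepsilon_T$ to hold for all $\pi_i$ in the class. I would try first to assume this directly. The fallback is to derive it from the bound $\varepsilon_T\le\gamma V_{\max}\,\mathrm{TV}$ between the true and predicted transition kernels, with the total-variation distance controlled uniformly over follower actions. The second caveat is gradient ascent reaching a global optimum: I would state the best-response claim only at global maximizers of $J_i^{\mathrm{SE,pred}}$, or weaken it to stationarity.
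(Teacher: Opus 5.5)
Your structure matches what the paper intends. The corollary is stated without a separate proof and is meant to follow from Lemma~\ref{lem:se-pdl}, applied to the predicted local MDP, together with Lemma~\ref{lem:se-bellman-bound} for the transfer. Your Steps~1, 3 and 4 carry this out correctly, with the caveats you already flag.

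\textbf{The gap is in Step~2.} In Step~1 you built the predicted MDP by substituting $\hat{a}_{j\mid i}^t$ into $\mathcal{P}$, so its transition and reward kernel differ from the environment's. But the advantage entering $\mathcal{L}_{\mathrm{policy}}$ uses the realized $r_i^t$ and $u_i^{t+1}$, which are generated with the leaders' actual actions. For those samples
\[
\mathbb{E}\bigl[\hat{A}_i^t\mid u,a\bigr]=\mathbb{E}_{\mathcal{P}}\bigl[r_i^t+\gamma V_i^{\mathrm{SE,pred}}(u_i^{t+1})\mid u,a\bigr]-V_i^{\mathrm{SE,pred}}(u),
\]
which is not $A_i^{\mathrm{SE,pred}}(u,a)$ in general. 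The difference is an action-level operator discrepancy that the state-level hypothesis of Lemma~\ref{lem:se-bellman-bound} does not control. Two further mismatches follow:
\begin{itemize}
\item States are visited according to the true-dynamics visitation, which is the $d_{\pi_i}$ of Lemma~\ref{lem:se-pdl} since it is defined under the real leader policies. Your Step~1 gradient instead uses the predicted-MDP visitation.
\item $u_i^{t+1}$ need not stay in the region where $V_\phi^i=V_i^{\mathrm{SE,pred}}$ is assumed.
\end{itemize}
So ``unbiased for $\nabla_\theta J_i^{\mathrm{SE,pred}}$'' holds only if the rollouts themselves are generated in the predicted MDP. You need to state this as an explicit idealization and read $d_{\pi_i}$ in the hypothesis as that MDP's visitation. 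Your replay-buffer remark handles the policy mismatch, not this dynamics mismatch.

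There is also a weighting issue. $\mathcal{L}_{\mathrm{policy}}$ sums uniformly over $t$, so it matches $\frac{1}{1-\gamma}\mathbb{E}_{u\sim d_{\pi_i}}[\cdot]$ only if states are drawn from $d_{\pi_i}$, or equivalently if the $k$-th term is weighted by $\gamma^k$. ``On-policy samples'' must mean exactly that.

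\textbf{A fix if you keep real rollouts.} The honest statement then goes straight to the ideal objective, and it is sharper than your Step~3. Take $u\sim d_{\pi_i}$ under the true dynamics and $u'=u_i^{t+1}$, and write
\[
\hat{A}_i^t=\bigl[r_i^t+\gamma V_i^{\mathrm{SE}}(u')\bigr]+\gamma\bigl(V_i^{\mathrm{SE,pred}}-V_i^{\mathrm{SE}}\bigr)(u')-V_i^{\mathrm{SE,pred}}(u).
\]
The first bracket averages to $Q_{i,\pi}^{\mathrm{SE}}(u,a)$. The last term is a baseline with zero mean against $\nabla_\theta\log\pi_\theta$. Hence
\[
\mathbb{E}\bigl[\nabla_\theta\log\pi_\theta(a\mid u)\,\hat{A}_i^t\bigr]=(1-\gamma)\nabla_\theta J_i^{\mathrm{SE}}+\gamma\,\mathbb{E}\bigl[\nabla_\theta\log\pi_\theta(a\mid u)\bigl(V_i^{\mathrm{SE,pred}}-V_i^{\mathrm{SE}}\bigr)(u')\bigr],
\]
and Lemma~\ref{lem:se-bellman-bound} bounds the norm of the last term by $\frac{\gamma\varepsilon_T}{1-\gamma}\mathbb{E}\|\nabla_\theta\log\pi_\theta\|$. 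This has three advantages:
\begin{itemize}
\item It needs $V_\phi^i=V_i^{\mathrm{SE,pred}}$ only on the support of the true visitation, which is the corollary's hypothesis read literally.
\item It gives a controlled-bias ascent direction on $J_i^{\mathrm{SE}}$ while using Lemma~\ref{lem:se-bellman-bound} only at the current policy. Your uniformity obstacle therefore disappears for the ``Moreover'' clause.
\item It says something about a single small step, which your value comparison cannot, because one small step changes $J_i^{\mathrm{SE,pred}}$ by far less than $2\varepsilon_T/(1-\gamma)$.
\end{itemize}
Uniformity over policies is still needed for the approximate best-response statement in Step~4.

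Finally, your fallback estimate of $\varepsilon_T$ should include the reward discrepancy. $r_i$ depends on the leaders' actions, so substituting predictions changes the reward as well as the kernel.
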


\vfill

\end{document}